\documentclass[11pt]{article}

\usepackage[numbers]{natbib}
\usepackage{booktabs}       
\usepackage{amsfonts}       
\usepackage{amsmath}
\usepackage{graphicx}
\usepackage{amsthm}
\usepackage{amssymb}
\usepackage{multirow}
\usepackage{makecell}

\usepackage[colorlinks,linkcolor=magenta,filecolor=blue,citecolor=blue,urlcolor=blue]{hyperref}%
\usepackage[top=1in, left=1in, right=1in, bottom=1in]{geometry}
\usepackage{subfigure}
\usepackage{amssymb}
\usepackage{pifont}
\usepackage{mathtools}
\usepackage{stmaryrd}
\usepackage[T1]{fontenc}
\usepackage{authblk}
\usepackage{enumitem}

\usepackage{wasysym}
\usepackage{subcaption}
\usepackage[capitalize,noabbrev]{cleveref}  
\usepackage{array} 
\usepackage{siunitx}     
\usepackage{algorithm}
\usepackage{algpseudocode}
\usepackage{setspace}
\usepackage{xcolor}   

\newtheorem{theorem}{Theorem}[section]

\newtheorem{corollary}[theorem]{Corollary}

\theoremstyle{definition}

\renewcommand{\epsilon}{\varepsilon}
\renewcommand{\phi}{\varphi}

\makeatletter
\newcommand*{\RN}[1]{\expandafter\@slowromancap\romannumeral #1@}
\makeatother

\makeatletter
\newcommand{\printfnsymbol}[1]{%
  \textsuperscript{\@fnsymbol{#1}}%
}
\makeatother

\title{To Compress or Not? Pushing the Frontier of Lossless GenAI Model Weights Compression with Exponent Concentration}

\author[1]{Zeyu Yang}
\author[2]{Tianyi Zhang}
\author[3]{Jianwen Xie}
\author[3]{Chuan Li}
\author[4]{Zhaozhuo Xu}
\author[2]{Anshumali Shrivastava}
\affil[1]{Dept. of Electrical and Computer Engineering, Rice University}
\affil[2]{Dept. of Computer Science, Rice University}
\affil[3]{Lambda, Inc.}
\affil[4]{Dept. of Computer Science, Stevens Institute of Technology}

\begin{document}

\maketitle
\begin{abstract}
    The scaling of Generative AI (GenAI) models into the hundreds of billions of parameters makes low-precision computation indispensable for efficient deployment. We argue that the fundamental solution lies in developing low-precision \emph{floating-point} formats, which inherently provide numerical stability, memory savings, and hardware efficiency without dequantization overhead. In this paper, we present a theoretical and empirical study of an \emph{exponent concentration} phenomenon in GenAI weights: exponents consistently exhibit low entropy across architectures and modalities. We show that this arises naturally from $\alpha$-stable distributions induced by stochastic gradient descent, and we prove tight bounds on the entropy of exponents. Our analysis establishes a theoretical compression limit near FP4.67, which motivates the design of a practical FP8 format. Building on these insights, we propose \textbf{Exponent-Concentrated FP8 (ECF8)}, a lossless compression framework with entropy-aware encoding and GPU-optimized decoding. Experiments on LLMs and DiTs up to 671B parameters demonstrate up to 26.9\% memory savings and 177.1\% throughput acceleration, with perfectly lossless computations, i.e., no deviation in model outputs. Our results establish exponent concentration as a statistical law of trained models and open a principled path for lossless low-precision floating-point design in the FP8 era.
\end{abstract}

\section{Introduction}
\label{sec:introduction}

The rapid growth of generative AI (GenAI) models, from large language models (LLMs) \citep{dubey2024llama, team2023gemini, hurst2024gpt, liu2024deepseek, yang2025qwen3} to diffusion transformers (DiTs) \citep{wan2025wan, batifol2025flux, wu2025qwen}, has led to parameter counts scaling into the hundreds of billions. Such parameter-intensive GenAI models make low-precision computation indispensable. General matrix multiplication (GeMM) \citep{dongarra1990set,goto2008anatomy,springer2018design} with reduced precision is the most straightforward way to achieve memory savings and possible acceleration, as it eliminates redundancy in weight representation while directly reducing hardware workloads.

Existing work has primarily focused on integer-based quantization \citep{zhang2024kv, zhang2024leanquant, yao2022zeroquant, dettmers2022gpt3, frantar2022gptq, xiao2023smoothquant, lin2024awq, liu2024spinquant}. While effective in reducing model size, these approaches suffer from two fundamental drawbacks: (a) they are \emph{lossy}, often introducing accuracy or generative quality degradation \citep{zhang202570}; and (b) they incur efficiency penalties in large-batch inference due to the required dequantization procedure and mixed-precision execution \citep{lin2024qserve, jin2024comprehensive}. Since integer tensors must be converted back into floating-point values before computing, throughput suffers, limiting their utility in high-performance deployment.

Motivated by the limitations of lossy quantization, DFloat11 \citep{zhang202570} revealed that the exponents of BF16 weights in LLMs have far lower entropy than the bitwidth allocated to store them, creating headroom for lossless compression via entropy coding. Yet this finding puzzled the community: Is there a fundamental principle that explains its success? Can the idea extend beyond BF16 to other formats? And most critically, can memory compression be transformed into actual \emph{inference acceleration} by getting fast decoding kernels? All of these unanswered questions can be answered positively by our paper.

In this paper, we present a theoretical and empirical analysis revealing an \emph{exponent concentration} phenomenon in GenAI weights: exponents exhibit low entropy \citep{shannon1948mathematical} and cluster within narrow ranges across architectures and modalities. We trace this to the heavy-tailed dynamics of stochastic gradient descent \citep{amari1993backpropagation}, which lead neural network weights to follow $\alpha$-stable distributions \citep{nikias1995signal}. This provides a rigorous explanation of why exponents concentrate and allows us to bound their entropy. Our analysis proves that the ultimate compression limit corresponds to a floating-point format of roughly FP4.67. While such a fractional format is impractical in modern GPU hardware, it motivates our design of a powerful, lossless FP8 variant.

Building on these insights, we introduce \textbf{Exponent-Concentrated FP8 (ECF8)}, a novel lossless compression framework that encodes exponents with entropy-aware coding and efficient GPU decoding. We show that \textbf{\textit{ECF8 can transform memory compression into inference acceleration}}, delivering up to 26.9\% memory savings and up to 177.1\% throughput acceleration across state-of-the-art GenAI models, without any observed deviation in generation.

\noindent In summary, our contributions are:
\begin{itemize}[leftmargin=*]
    \item We provide a theoretical analysis of exponent concentration in GenAI weights, proving that $\alpha$-stable distributions lead to bounded low-entropy exponents with a compression limit near FP4.67.
    \item We empirically validate exponent concentration across large LLMs and DiTs, showing that layer-wise entropy consistently lies around 2--3 bits.
    \item We design and implement \textbf{ECF8}, a lossless FP8 compression framework with encoding and GPU-optimized decoding, leading to inference acceleration.
    \item We demonstrate practical benefits on models up to 671B parameters, achieving significant memory reduction and throughput gains while preserving bit-exact fidelity.
\end{itemize}

\section{Exponent Concentration Leads to Lossless Weight Compression In Generative AI}
\label{sec:exponent}

\paragraph{Preliminary.}  
Floating-point numbers in IEEE-754 format consist of three components: a sign bit $s$, an exponent $E$, and a mantissa $M$. 
In the low-precision computing paradigm of deep learning, the exponents determine the dynamic range of representable values. If the exponent values are highly concentrated rather than spread uniformly, their entropy is low, which directly implies that fewer bits are needed for lossless representation. This motivates analyzing the statistical laws that govern exponent distributions in trained model weights.

\subsection{Observation: Low-Entropy Exponents in Generative AI Model Weights}  
We empirically examine weight matrices from parameter-intensive GenAI models, including transformers used for language and vision tasks. Across diverse architectures (LLMs, diffusion models, and multimodal transformers), we consistently observe that the exponents of weight values occupy a very narrow range and have much lower entropy than expected under an alpha-stable assumption. As shown in Figure~\ref{fig:entropy}, histograms of exponent values cluster tightly around a few modes, and the measured Shannon entropy is typically close to $2$-$3$ bits, in contrast to the $8$ or more bits allocated in standard floating-point formats. This persistent low-entropy phenomenon strongly suggests an underlying distributional principle.


\begin{figure*}[htb]
  \centering
  \includegraphics[width=0.33\textwidth]{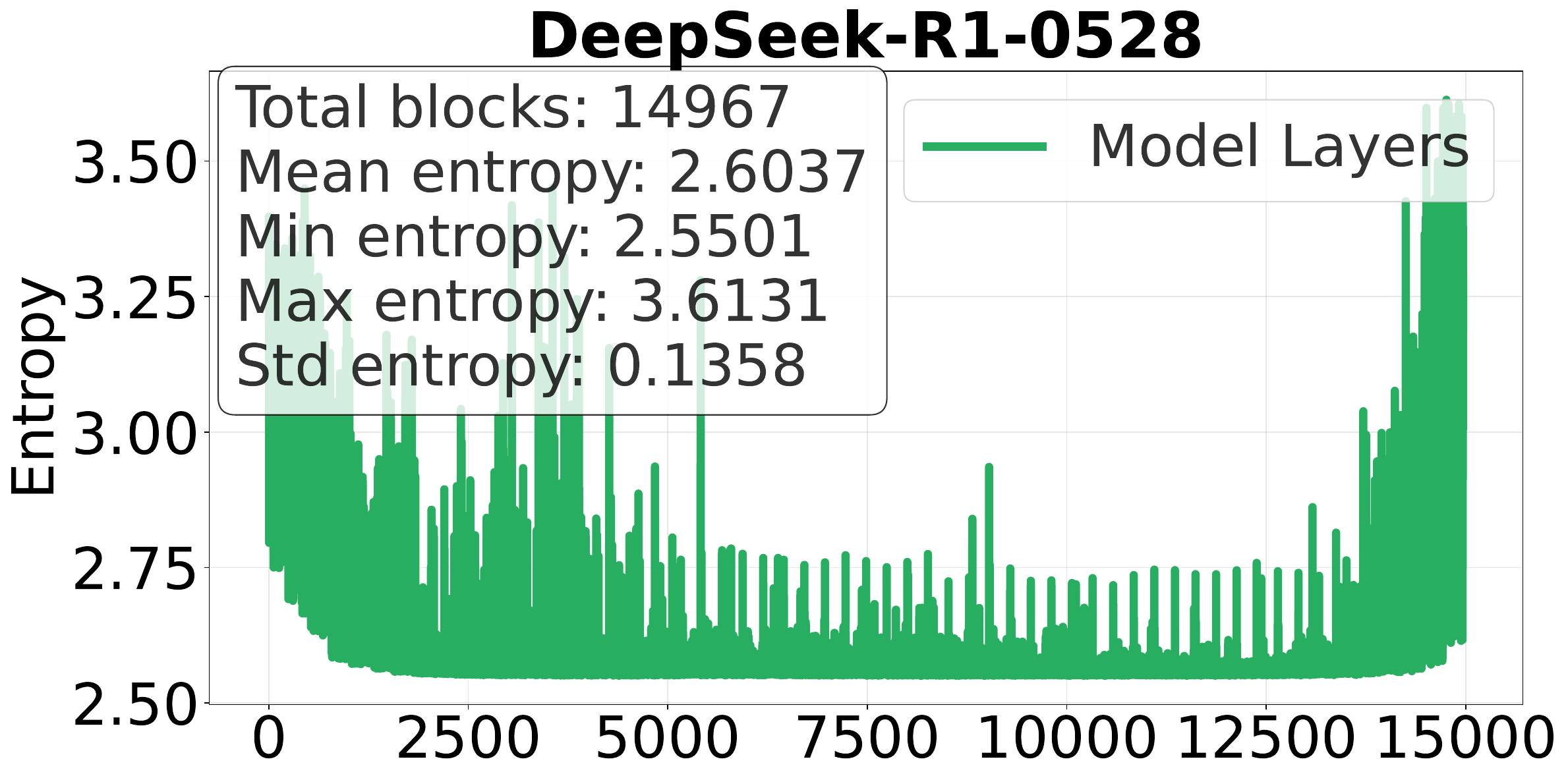}%
  \includegraphics[width=0.33\textwidth]{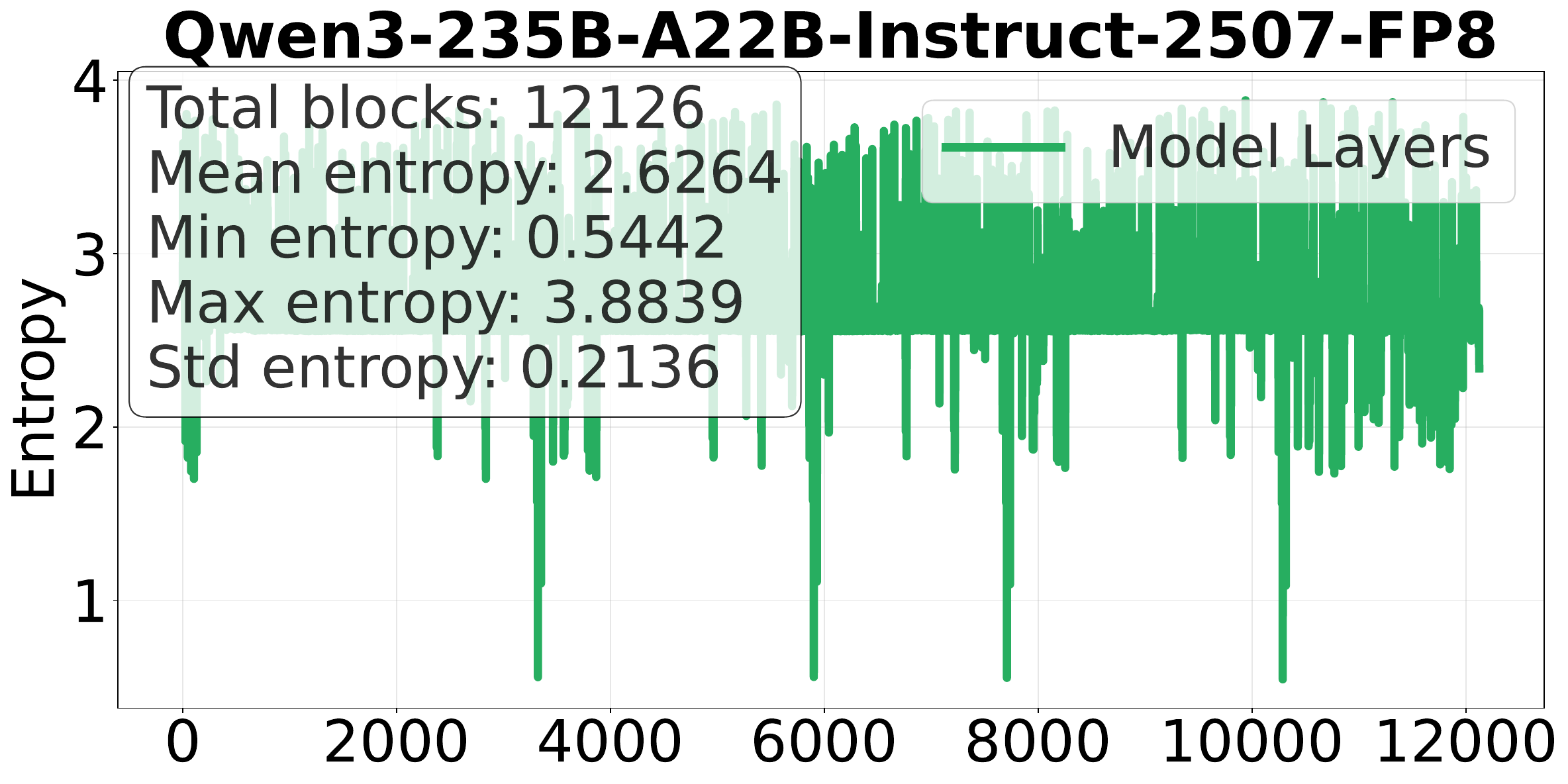}%
  \includegraphics[width=0.33\textwidth]{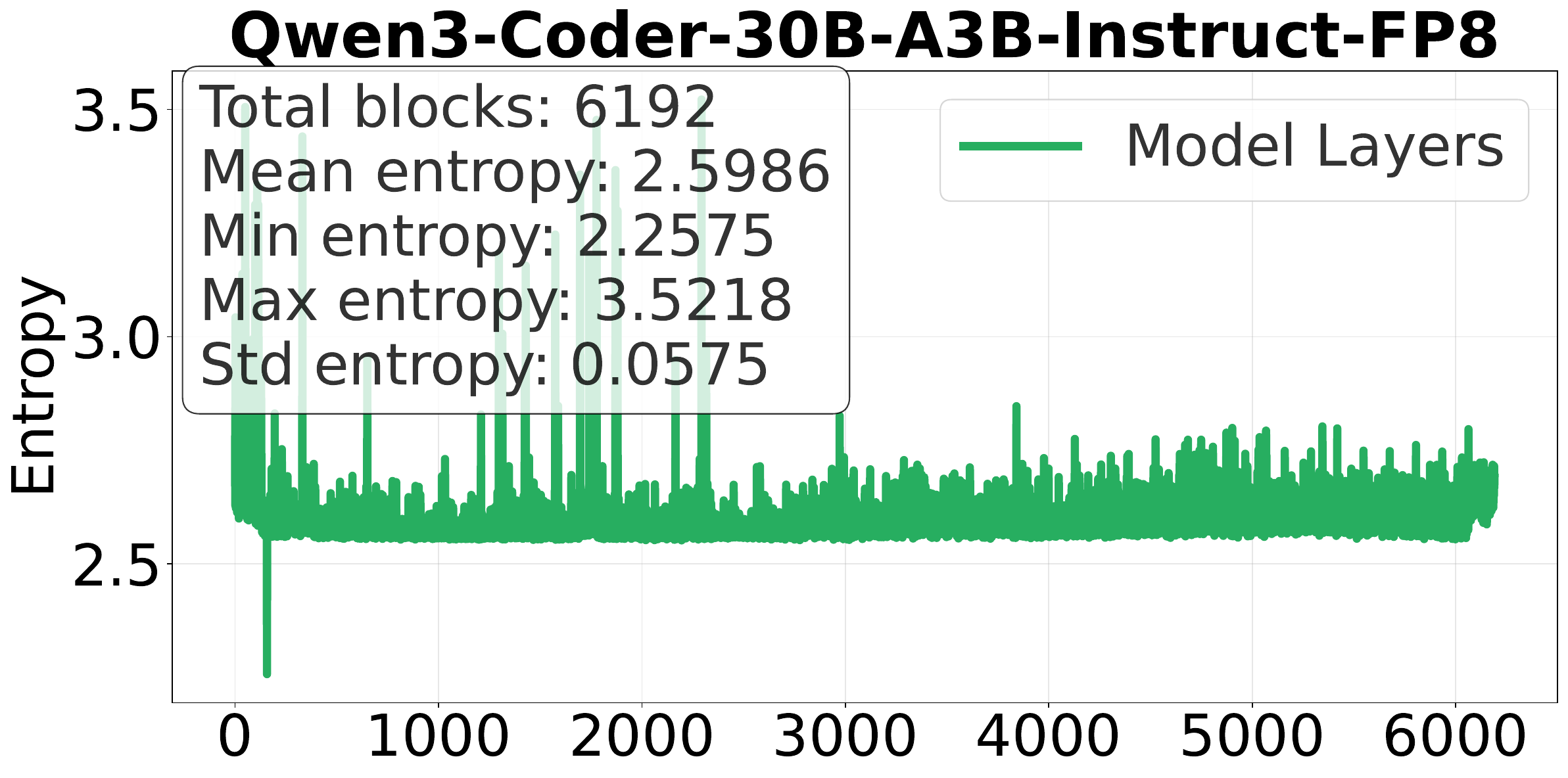}\\
  \includegraphics[width=0.33\textwidth]{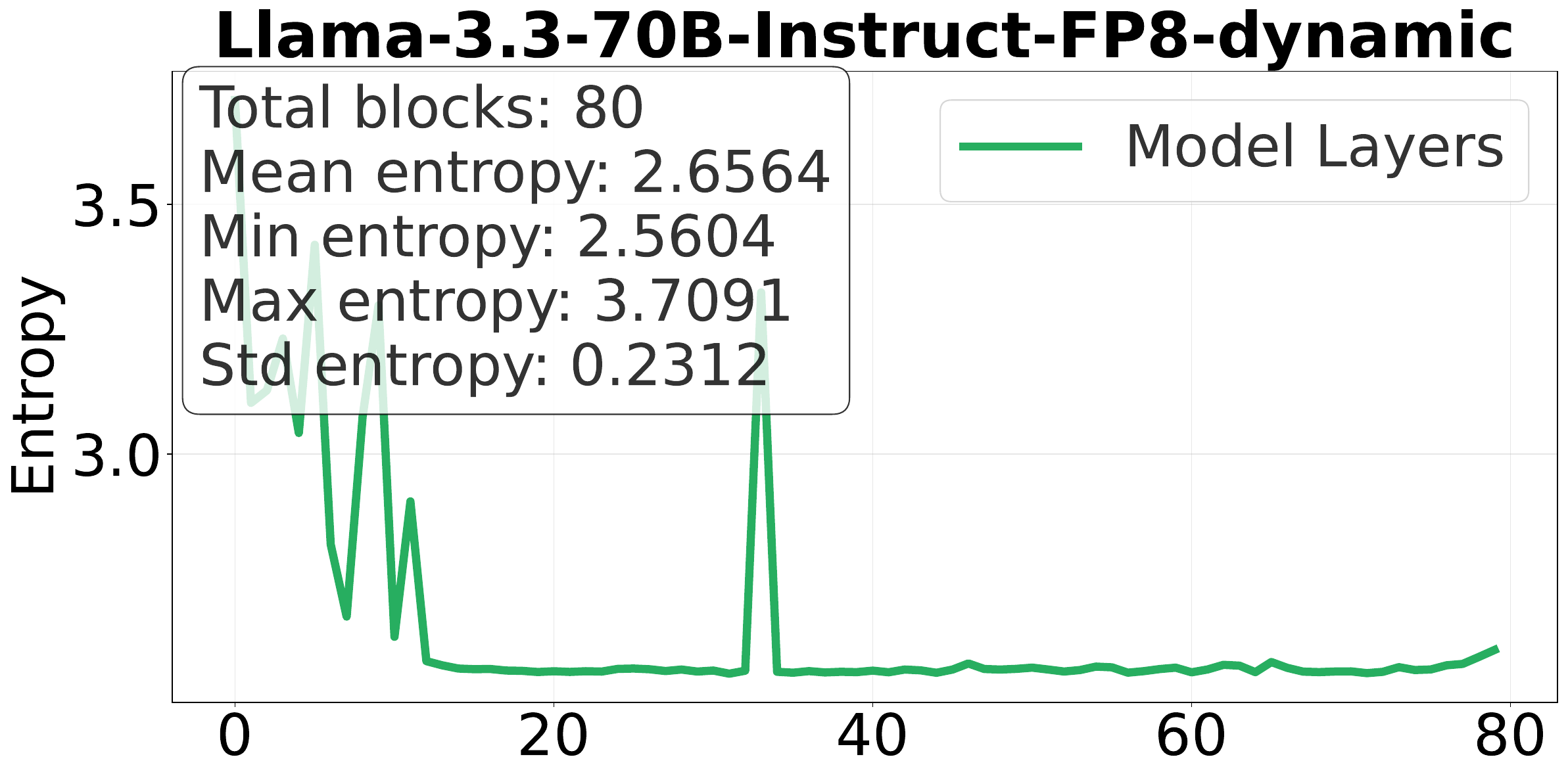}%
  \includegraphics[width=0.33\textwidth]{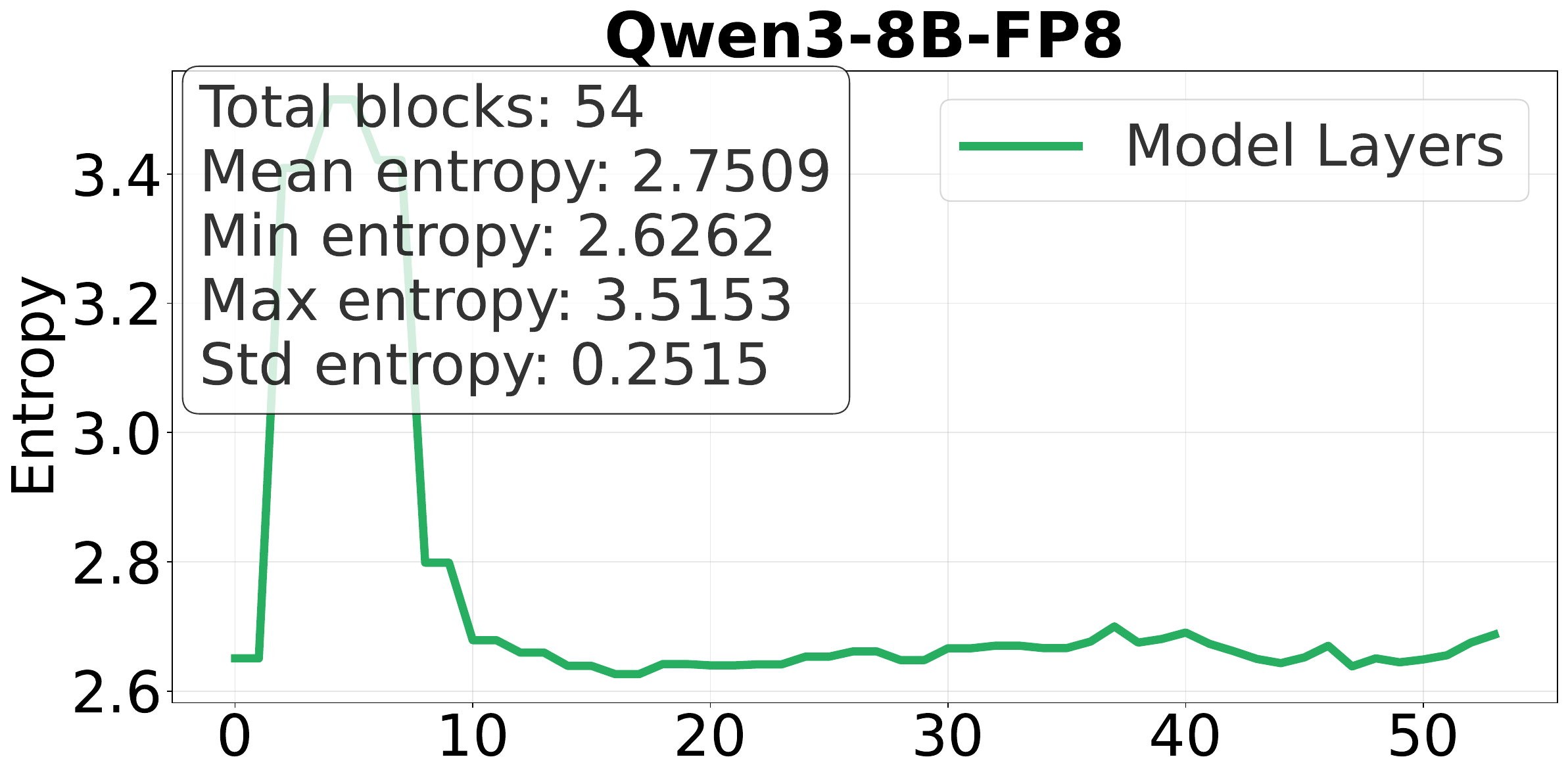}%
  \includegraphics[width=0.33\textwidth]{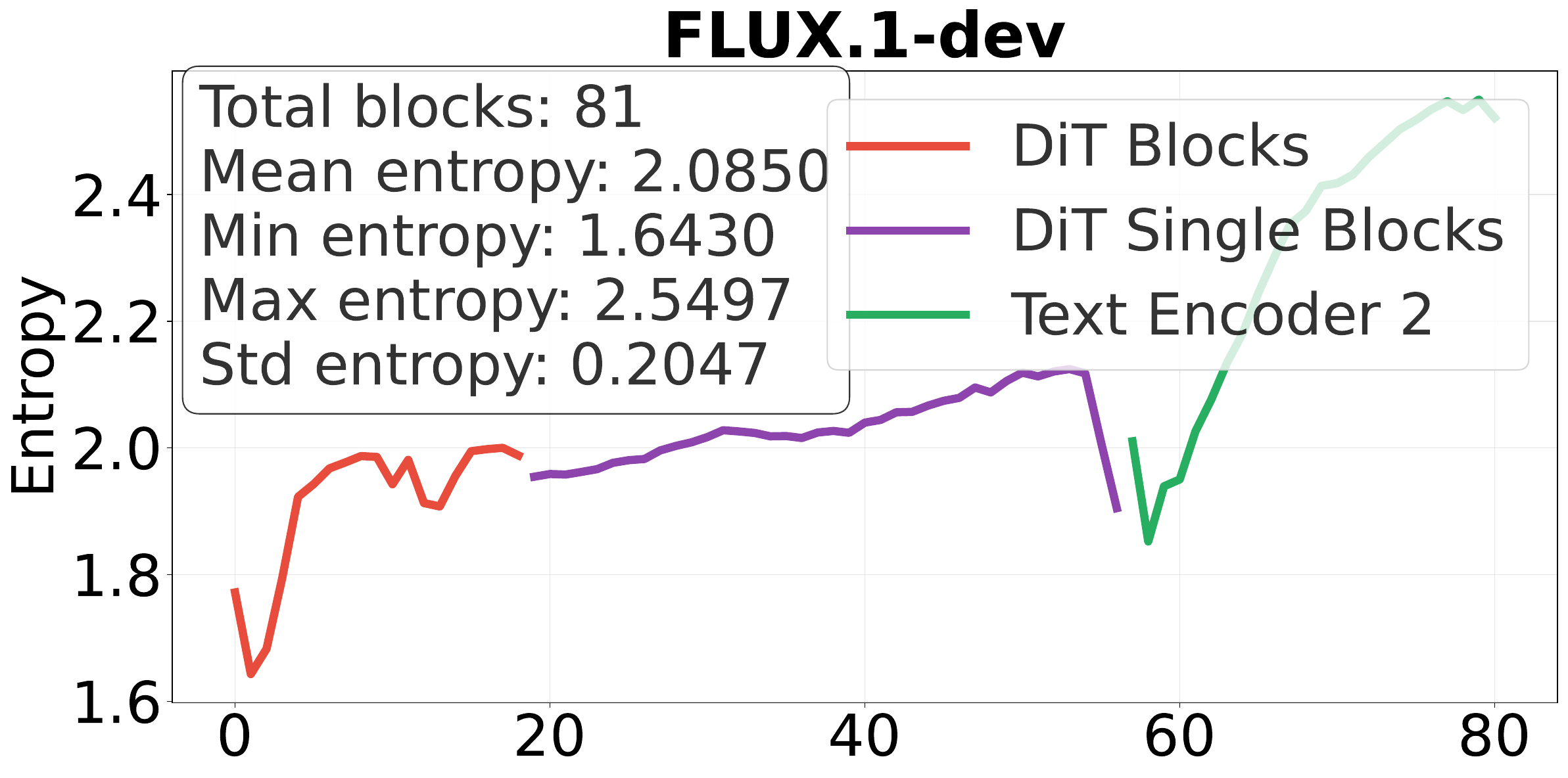}\\
  \includegraphics[width=0.33\textwidth]{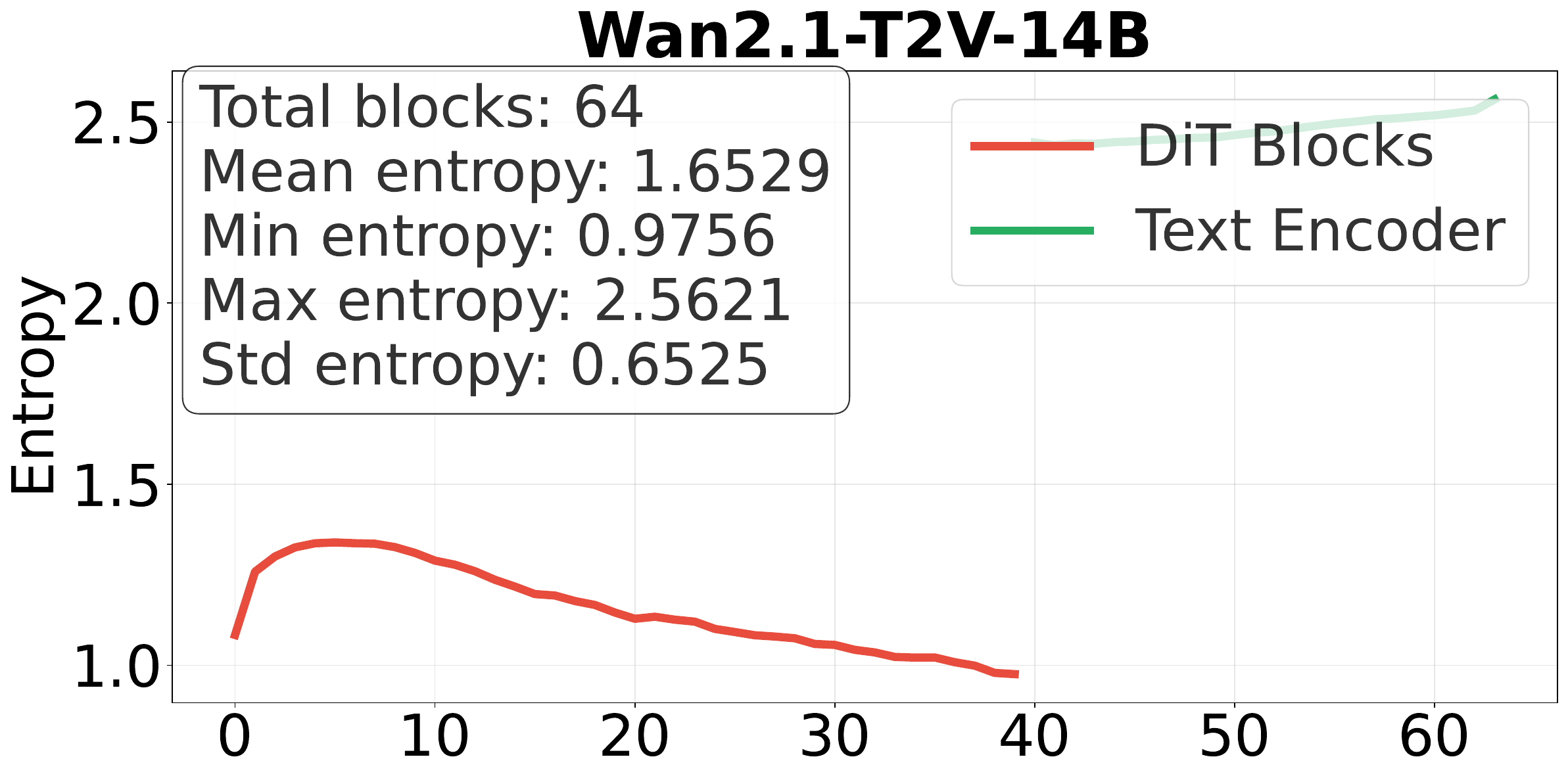}%
  \includegraphics[width=0.33\textwidth]{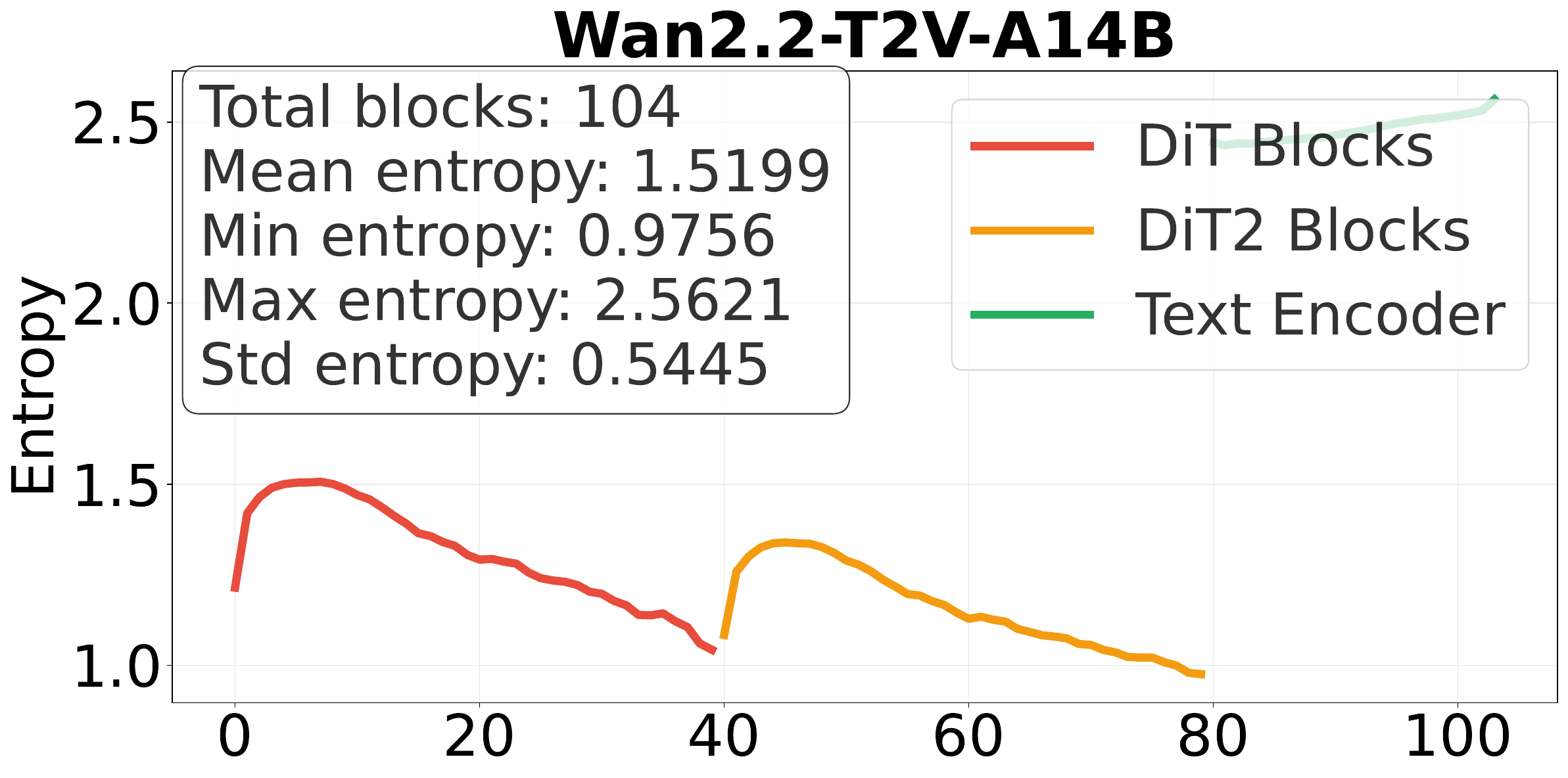}%
  \includegraphics[width=0.33\textwidth]{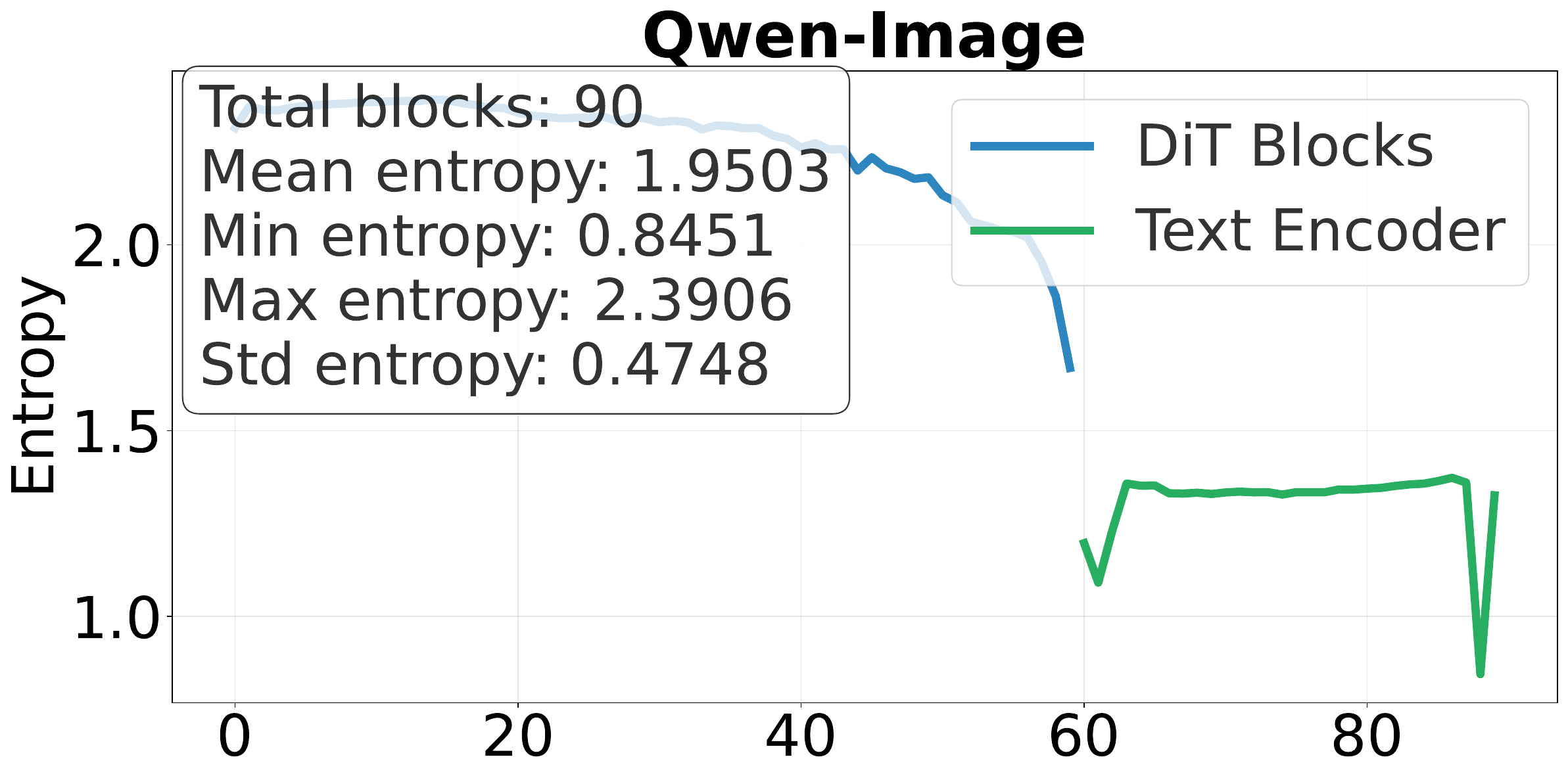}
  \vspace{-0.5em}
  \caption{Entropy analysis across transformer blocks for different model architectures. The x-axis represents the block index within each model, and the y-axis shows the entropy values. Different colors indicate different block types within each architecture.}
  \label{fig:entropy}
\end{figure*}

\subsection{Analysis: Exponent Concentration of \texorpdfstring{$\alpha$}{}-Stable Variables}

\paragraph{Set-up and notations.}  
Let $X \in \mathbb{R}$ be a continuous random variable representing a neural network weight. In the IEEE-754 floating-point representation, a nonzero real number $x$ is encoded as:
\[
x = (-1)^s \cdot 2^E \cdot M, \quad \text{with } E \in \mathbb{Z}, M \in [1, 2),
\]
where $s$ is the sign bit, $E$ is the exponent, and $M$ is the mantissa. We define:
\[
E = \left\lfloor \log_2 |X| \right\rfloor,
\]
as the \emph{floating-point exponent} of $X$. Our aim is to analyze the entropy and compression limit of $E$ when $X$ follows a symmetric $\alpha$-stable distribution:
\[
X \sim S_\alpha(\beta=0, \gamma, \delta), \quad \text{with } \alpha \in (0,2].
\]

\subsubsection{Why Neural Network Weights Follow \texorpdfstring{$\alpha$}{}-Stable Distributions}  
Trained neural network weights are formed by accumulating stochastic updates over time. Under stochastic gradient descent (SGD), the update rule $\theta_{t+1} = \theta_t - \eta \cdot \nabla \mathcal{L}(\theta_t; \xi_t)$ introduces heavy-tailed noise from random mini-batch sampling. Empirical work shows that the gradient noise often exhibits power-law tails $\mathbb{P}(|\Delta_t| > x) \sim x^{-\alpha}$ with $\alpha < 2$. By the Generalized Central Limit Theorem, sums of such heavy-tailed variables converge to $\alpha$-stable distributions. Thus, neural network weights after many updates approximately follow symmetric $\alpha$-stable laws, providing the theoretical foundation for our analysis.

\subsubsection{Exponent Entropy Concentration}  
\begin{theorem}[Exponent Entropy Concentration]
Let $E = \lfloor \log_2 |X| \rfloor$ where $X \sim S_\alpha(\beta=0, \gamma, \delta)$. Then $E$ follows a discrete two-sided geometric distribution with parameter $q = 2^{-\alpha}$:
\[
\mathbb{P}(E = k) = \frac{1-q}{1+q} \cdot q^{|k|}, \quad k \in \mathbb{Z}.
\]
The Shannon entropy of $E$ is bounded by:
\[
\frac{\alpha}{1 + 2^{-\alpha}} \le H(E) \le \frac{\alpha}{1 - 2^{-\alpha}}.
\]
In particular, $H(E)$ is finite for all $\alpha > 0$.
\end{theorem}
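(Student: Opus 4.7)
The plan is to derive the exact PMF of $E$ first, then reduce the entropy computation to a closed form, and finally sandwich that closed form between $\alpha/(1+q)$ and $\alpha/(1-q)$ via elementary inequalities.

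For the PMF, since $X \sim S_\alpha(\beta=0,\gamma,\delta)$, any location shift and rescaling only changes $E$ by an additive constant, which preserves $H(E)$; I may therefore assume the canonical symmetric $\alpha$-stable law. The defining tail property $\mathbb{P}(|X| > x) \propto x^{-\alpha}$ implies that the dyadic bin $[2^k, 2^{k+1})$ carries mass proportional to $2^{-\alpha k} - 2^{-\alpha(k+1)} = (1-q)\, q^k$ for $k \ge 0$, where $q = 2^{-\alpha}$. Reflectional symmetry in log-scale gives the analogous expression $(1-q)\, q^{-k}$ for $k < 0$. Normalizing against $\sum_{k \in \Z} q^{|k|} = (1+q)/(1-q)$ then recovers $\mathbb{P}(E=k) = \tfrac{1-q}{1+q}\, q^{|k|}$.

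With the PMF in hand, substituting $\log_2 q = -\alpha$ into $H(E) = -\sum_k p_k \log_2 p_k$ yields
\[
H(E) = \log_2\!\left(\tfrac{1+q}{1-q}\right) + \alpha \, \E[|E|],
\]
and a standard geometric-series calculation gives $\E[|E|] = 2q/(1-q^2)$. Because $q = 2^{-\alpha} \in (0,1)$ for every $\alpha > 0$, both terms are finite, proving $H(E) < \infty$.

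To extract the sandwich bounds I would then dominate $\log_2((1+q)/(1-q))$ and $\alpha \E[|E|]$ separately against $\alpha/(1 \pm q)$ using standard inequalities relating $\log$ to rational functions of $q$, matching the two halves of the log-plus-moment decomposition to the two halves of the rational bound. I expect the principal technical obstacle to live not in the bound manipulation but in the PMF derivation itself: a genuine symmetric $\alpha$-stable law exhibits power-law tails only asymptotically, whereas the stated exact two-sided geometric PMF requires this behavior to hold on every dyadic bin. To handle this I would appeal to the scale self-similarity of the stable family, which forces the consecutive ratios $\mathbb{P}(E = k+1)/\mathbb{P}(E = k)$ to approach $q$, and treat the geometric PMF as the canonical limiting model that captures the exponent-concentration phenomenon cleanly; any deviation is a lower-order correction that does not affect the entropy bounds at the stated resolution.
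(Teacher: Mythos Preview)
Your strategy mirrors the paper's: argue from the $\alpha$-stable tail asymptotic that $\mathbb{P}(E=k)$ is (approximately) two-sided geometric with ratio $q=2^{-\alpha}$, write a closed form for $H(E)$, and then bound. Your closed form $H(E)=\log_2\tfrac{1+q}{1-q}+\alpha\,\E[|E|]$ with $\E[|E|]=2q/(1-q^2)$ is correct; the paper uses a slightly different decomposition involving the binary entropy $h_2$ and bounds that term by~$1$. You are also more candid than the paper in flagging that the exact two-sided geometric PMF is a modeling idealization rather than the literal law of $E$ for a genuine stable variable (which has power-law tails only asymptotically and has no ``reflectional symmetry in log-scale'' near zero, where the density is bounded).

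There is, however, a genuine gap in the final bounding step that neither your sketch nor the paper's one-line ``plugging and bounding terms'' can close: the stated upper bound $H(E)\le \alpha/(1-2^{-\alpha})$ is simply false for the two-sided geometric across the full range $\alpha\in(0,2]$. At $\alpha=1$ one has $q=1/2$ and $H(E)=\log_2 3 + 4/3\approx 2.92$, while the claimed upper bound is $1/(1-1/2)=2$. So your plan to ``dominate separately using standard inequalities'' cannot succeed as written; the inequality only holds for $\alpha$ sufficiently close to~$2$ (which is the sole regime the paper actually instantiates numerically). You should either restrict $\alpha$ or loosen the constant. A minor side issue: a nonzero location parameter $\delta$ does not translate $E=\lfloor\log_2|X|\rfloor$ by an additive constant, so your reduction to the canonical case requires $\delta=0$ as a hypothesis rather than being absorbed by a shift.
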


\begin{proof}
For large $|x|$, the tails of an $\alpha$-stable distribution behave like $\mathbb{P}(|X| > x) \sim C_\alpha x^{-\alpha}$. Thus, the probability of exponent $E = k$ is:
\[
\mathbb{P}(E = k) \approx \mathbb{P}(2^k \le |X| < 2^{k+1}) = C_\alpha 2^{-k\alpha}(1 - 2^{-\alpha}),
\]
which corresponds to a two-sided geometric distribution. The entropy of this distribution is:
\[
H(E) = -\sum_{k \in \mathbb{Z}} \mathbb{P}(E = k) \log_2 \mathbb{P}(E = k)
= h_2\left( \frac{1 - q}{1 + q} \right) + \frac{2q}{1 + q} \cdot \frac{|\log_2 q|}{1 - q},
\]
where $h_2(p) = -p \log_2 p - (1 - p) \log_2 (1 - p) \le 1$ is the binary entropy. Plugging $q = 2^{-\alpha}$ and bounding terms gives the inequality.
\end{proof}

\paragraph{Interpretation.}  
This theorem shows that the floating-point exponents of $\alpha$-stable weights do not spread evenly across the integer line but instead concentrate around zero, decaying geometrically with rate $2^{-\alpha}$. The entropy bound demonstrates that this concentration is strong enough to guarantee finite entropy regardless of $\alpha$, and tighter concentration (smaller $\alpha$) leads to smaller entropy. Practically, this means that exponents carry very limited uncertainty, which enables efficient compression.

\subsection{Impact: Exponent Concentration Guides Lossless Compression}

\begin{corollary}[Compression Limit]
The minimal expected number of bits required to losslessly encode the exponent $E$ is:
\[
L_{\min} = H(E).
\]
Therefore, exponent values of neural network weights drawn from an $\alpha$-stable distribution can be encoded in:
\[
O\left( \frac{\alpha}{1 - 2^{-\alpha}} \right) \text{ bits on average}.
\]
\end{corollary}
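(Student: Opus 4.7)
The plan is to reduce the corollary to Shannon's noiseless source coding theorem and then substitute the entropy bound from the preceding theorem. First, I would formalize the coding model: a sequence of weight exponents $(E_1, \ldots, E_n)$ is treated as i.i.d.\ draws from the two-sided geometric distribution with parameter $q = 2^{-\alpha}$ established above, and $L_{\min}$ is defined as the infimum, over uniquely decodable binary codes, of the expected per-symbol code length as $n \to \infty$.

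Second, I would invoke two classical ingredients. The converse direction $L_{\min} \ge H(E)$ follows from Kraft's inequality combined with the definition of Shannon entropy, and applies to any uniquely decodable code on the (countably infinite) exponent alphabet $\mathbb{Z}$. The direct part $L_{\min} \le H(E)$ is obtained by constructing a Shannon--Fano--Elias or arithmetic code on blocks of length $n$, whose expected length is bounded by $n H(E) + O(1)$; dividing by $n$ and letting $n \to \infty$ drives the per-symbol redundancy to zero. Since the preceding theorem guarantees $H(E) < \infty$ for every $\alpha > 0$, this coding construction is well defined despite the alphabet being countably infinite.

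Third, I would substitute the upper bound $H(E) \le \alpha / (1 - 2^{-\alpha})$ from the preceding theorem to obtain the claimed $O\!\left(\alpha/(1 - 2^{-\alpha})\right)$ bits per exponent on average.

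The main subtlety, rather than a true technical obstacle, is the interpretation of ``on average'' in the statement: for a single symbol with a prefix-free code one only has $H(E) \le L_{\min} < H(E) + 1$, whereas the equality $L_{\min} = H(E)$ is the asymptotic per-symbol rate achieved by block coding. The $O(\cdot)$ notation in the corollary absorbs this additive constant, so both readings yield the same stated rate. I would briefly remark that in practice the exponent alphabet is further truncated by the floating-point range, which only tightens the bound and makes the constructive codes (e.g., Huffman on the truncated alphabet) strictly finite.
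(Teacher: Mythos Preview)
Your proposal is correct and follows exactly the route the paper implicitly relies on: the paper states the corollary without proof, treating $L_{\min}=H(E)$ as an immediate application of Shannon's source coding theorem and then reading off the $O(\alpha/(1-2^{-\alpha}))$ rate from the entropy upper bound in the preceding theorem. Your write-up simply makes explicit the converse via Kraft's inequality and the achievability via block (arithmetic/Shannon--Fano--Elias) coding, together with the finite-entropy check and the per-symbol versus asymptotic interpretation---all of which the paper leaves unstated.
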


\paragraph{Numerical instance (\(\alpha=2\)).}  
When $\alpha=2$ (the Gaussian-like case), we have $2^{-\alpha} = 1/4$. The entropy bounds give:
\[
1.6 \;\; \le \;\; H(E) \;\; \le \;\; 2.67.
\] Thus, the exponent itself has a compression limit of about $2.67$ bits in the extreme case. However, a floating-point representation also requires one sign bit and several bits for the mantissa to preserve numerical precision. Even with a minimal mantissa allocation (e.g., $\sim 1$ bit) plus the sign, the absolute floor is around:
\[
2.67 \;+\; 1 \;+\; 1 \;\approx\; 4.67 \text{ bits}.
\]
In practice, it is infeasible to implement a ``FP4.67'' or even FP5 format efficiently due to alignment and hardware constraints. Therefore, our proposed FP8 format (ECF8) represents a practical engineering choice: it is close to the entropy-driven theoretical limit, while retaining sufficient mantissa precision and hardware compatibility for efficient inference.

\section{ECF8: Lossless LLM Weight Compression with Exponent Concentration}
\label{sec:ecf8}

We present ECF8, a lossless compression algorithm that exploits the statistical properties of FP8 weights in pre-trained generative AI models. Our method consists of three core components: an encoding scheme based on Huffman coding, a parallel GPU decoding kernel for variable-length decoding, and a dynamic tensor management system that reduces memory footprint during inference.

\subsection{Encoding}
\label{sec:ecf8.encoding}

Our CUDA-based Huffman encoding pipeline transforms neural network weights into a compressed format optimized for parallel GPU decoding through three sequential stages. First, we generate optimal Huffman codes by analyzing weight exponent frequencies and constructing the corresponding binary tree. Second, we build hierarchical lookup tables that enable efficient variable-length code decoding using 8-bit subtables aligned with GPU memory architecture. Third, we encode weight exponents into a compressed bitstream while computing synchronization metadata that enables autonomous parallel decoding across GPU thread blocks.

\paragraph{Huffman code generation.} 
Neural network weights in FP8 format allocate 4 bits for exponents, yet empirical analysis reveals that exponent entropy is substantially lower due to non-uniform frequency distributions. We exploit this entropy gap through Huffman coding, which assigns variable-length prefix-free codes with shorter sequences for frequent exponents. Our encoding procedure extracts exponents from model weights, computes their empirical frequency distribution $p(x)$ for exponent values $x \in \{0, 1, \ldots, 15\}$, and constructs the optimal Huffman tree minimizing expected code length $\mathbb{E}[\ell] = \sum_{x} p(x) \ell(x)$. To ensure GPU compatibility, we constrain maximum code length to 16 bits, requiring frequency adjustment for rare symbols while preserving near-optimality. This constraint is rarely violated in transformer layers empirically.

\paragraph{Hierarchical lookup table construction.} 
We construct a multi-level lookup system that processes variable-length Huffman codes through sequential 8-bit operations. The system comprises two components: a \textit{cascaded decode table} mapping codes to symbols, and a \textit{length table} storing code lengths indexed by symbol value.

The cascaded structure organizes codes by their byte-aligned prefixes. Let $\mathcal{P} = \{p_1, p_2, \ldots, p_k\}$ denote the set of all byte-aligned prefixes extracted from Huffman codes, ordered by length. For each prefix $p_i$ of length $8j$ bits, we construct a lookup subtable $LUT_i$ with 256 entries. Each entry $LUT_i[b]$ for byte value $b \in \{0, \ldots, 255\}$ contains either
\begin{itemize}[leftmargin=*]
\item a decoded exponent $x \in \{0, \ldots, 15\}$ if the code $p_i \| b$, where $\|$ denotes concatenation, corresponds to a complete Huffman code,
\item or a pointer value $256 - \text{index}(p_{i'})$ directing lookup to subtable $LUT_{i'}$ for longer prefix $p_{i'} = p_i \| b$.
\end{itemize}
This design bounds memory usage at $O(|\mathcal{P}| \cdot 256)$ entries while maintaining $O(\lceil \ell_{\max}/8 \rceil)$ lookup time for codes of length $\ell_{\max}$. The length table $L[x]$ stores the bit length of each symbol $x$, enabling proper bitstream advancement during decoding. Figure~\ref{fig:lookup_table} illustrates the construction process of the \textit{cascaded decode table} using a simplified example using the characters ``a'', ``b'', ``c'', ``d'', and ``e'' as symbols rather than exponents $x \in \{0, \ldots, 15\}$.

\begin{figure}[htbp]
    \centering
    \includegraphics[width=0.85\textwidth]{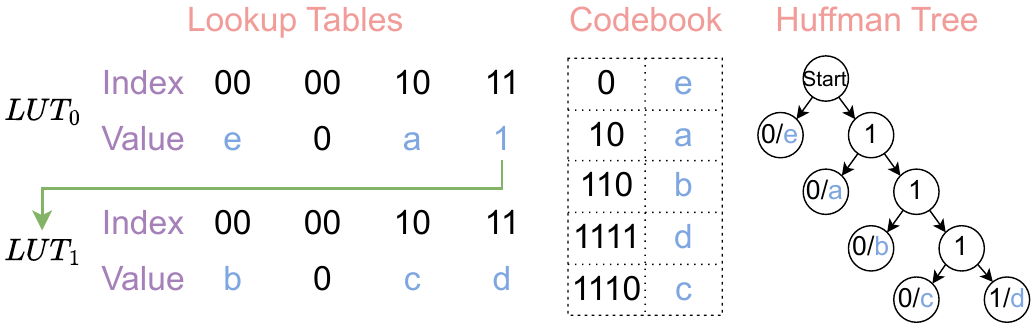}
    \vspace{-0.5em}
    \caption{A simplified illustration of lookup table construction. A Huffman tree is built from the string ``aaabbcddeeeee''. Lookup tables are configured to be 2-bit, so each subtable has 4 entries. Codes for symbols ``e'' and ``a'' are at most 2 bits long and appear directly in the first table. In contrast, codes for ``b'', ``c'', and ``d'' exceed 2 bits and begin with ``11'', so entry ``11'' in the first table points to a secondary table. The pointer value is 1, indicating subtable~1. A second lookup in this subtable resolves the final symbol.}
    \label{fig:lookup_table}
\end{figure}

\paragraph{Encoding and synchronization metadata generation.}
We encode exponents into a compressed bitstream while generating coordination metadata for parallel GPU decoding. Given fixed parameters $B$ (bytes per thread) and $T$ (threads per block), we sequentially process symbols $x_1, x_2, \ldots, x_n$, concatenating their Huffman codes into a continuous bitstream and extracting complete bytes for storage. Since variable-length codes may span thread boundaries, we compute \textit{gap values} to ensure proper synchronization. For thread $t$ processing bytes $[tB, tB + 2)$, the gap $g_t$ represents the bit offset from the previous thread:
$$g_t = \left(\sum_{i=0}^{t-1} \sum_{x_j \in \text{symbols}(i)} \ell(x_j)\right) \bmod 8B$$
where $\text{symbols}(i)$ denotes the set of symbols processed by thread $i$. Additionally, we compute \textit{output positions} $o_b$ for each thread block $b$, tracking the cumulative count of symbols processed by preceding blocks. This metadata enables autonomous thread block operation during parallel decompression without requiring inter-block synchronization.

\subsection{Decoding}
\label{sec:ecf8.decoding}

The decoding process reconstructs original exponent values from the Huffman-encoded bitstream using a highly optimized CUDA kernel that exploits GPU parallelism through hierarchical memory management and coordinated thread execution. The decoding algorithm operates in five distinct phases: \textbf{i. memory initialization}, where each thread allocates a register buffer of fixed size (bytes per thread plus additional lookahead bytes) and the thread block establishes shared memory for coordination and output staging; \textbf{ii. data loading}, where threads load their assigned segments of the encoded bitstream from global memory into thread-local registers; \textbf{iii. parallel counting}, where each thread performs initial decoding based on gap values to determine the number of decodable symbols, followed by a parallel reduction algorithm across the thread block to compute cumulative symbol counts and establish output positions for each thread (ensuring threads write to non-overlapping memory regions); \textbf{iv. coordinated decoding}, where threads decode their full symbol sequences and write results to shared memory using the previously computed output positions; and \textbf{v. global memory write-back}, where decoded symbols are transferred from shared memory to global memory via coalesced writes. This approach minimizes global memory access overhead while maximizing parallelism through careful coordination of thread-local computation and block-level synchronization primitives. A detailed description of the decoding algorithm is provided in Algorithm~\ref{alg:ecf8_fp8_decode}.

\subsection{Tensor Management}
\label{sec:ecf8.tensor}

Our tensor management system enables on-demand weight decompression during model inference through strategic memory allocation and layer-wise processing. We implement a just-in-time decompression mechanism using PyTorch forward hooks that intercept layer execution and perform weight reconstruction immediately before computation. The system maintains a single pre-allocated GPU memory buffer of size equal to the largest layer's weight tensor, eliminating dynamic memory allocation overhead during inference. For each layer $\ell_i$ in the sequential model architecture, the forward hook invokes our CUDA decompression kernel to reconstruct weights $W_i$ from their compressed representation and writes the result to the shared memory buffer. Upon completion of layer $\ell_i$'s forward pass, the buffer becomes available for layer $\ell_{i+1}$, enabling memory-efficient inference with constant GPU memory overhead regardless of model depth. 

\section{Experiments}
\label{sec:experiments}

In this section, we test nine models spanning autoregressive language models, diffusion transformers, and mixture-of-experts variants from 8B to 671B parameters. We evaluate ECF8 across two critical dimensions: compression effectiveness and end-to-end inference performance. Specifically, we aim to address the following two fundamental research questions related to the deployment of production-scale GenAI models:

\begin{itemize}[leftmargin=*]
  \item \textbf{RQ1:} What memory reduction can production-scale transformers achieve while maintaining bit-exact weight reconstruction?
  \item \textbf{RQ2:} How does weight compression affect memory footprint and inference latency, can we achieve faster inference under the same memory budget?
\end{itemize}

\subsection{Lossless Memory Saving for FP8 Weights}
\label{sec:experiments.memory}

This section addresses \textbf{RQ1} regarding the rate of memory reduction for production-scale transformers. ECF8 achieves memory reductions from 9.8\% to 26.9\% across all evaluated models. LLMs demonstrate consistent reductions between 9.8\% and 14.8\%, while DiTs achieve higher compression ratios, with the Wan2.2-T2V-A14B model reaching 26.9\% reduction. Figure~\ref{fig:qwen_image_ecf8} shows that the compression is lossless.

\begin{table*}[htb!]
  \centering
  \footnotesize
  \caption{Memory savings and throughput improvements under fixed memory constraints. For throughput evaluation, DeepSeek-R1-0528 is tested on 8$\times$H200 systems, Qwen3-235B-A22B-Instruct-2507-FP8 on 4$\times$H200 systems, and the remaining models on single GH200 (96GB) systems. Memory savings and throughput improvements under fixed memory constraints demonstrate ECF8's practical deployment benefits.}
  \vspace{-1em}
  \label{tab:memory_saving}
  \resizebox{\textwidth}{!}{%
    \begin{tabular}{l
                    S[table-format=5.2]@{ $\rightarrow$ }S[table-format=1.2]
                    S[table-format=2.2]
                    l
                    S[table-format=3.2]}
      \toprule
      \textbf{Model} &
      \multicolumn{2}{c}{\textbf{Memory Change (GB)}} &
      \textbf{Memory $\downarrow$ (\%)} &
      \textbf{Supported Machine} &
      \textbf{Throughput $\uparrow$ (\%)} \\
      \midrule
      DeepSeek-R1-0528 & 623.19 & 530.26 & 14.8 & 8$\times$H100 (80 GB) & 150.3 \\
      Qwen3-235B-A22B-Instruct-2507-FP8 & 217.77 & 185.98 & 14.4 & 4$\times$H100 (80 GB) & 35.9 \\
      Llama-3.3-70B-Instruct-FP8-dynamic & 63.76 & 54.69 & 13.4 & 1$\times$H100 (80 GB) & 11.3 \\
      Qwen3-Coder-30B-A3B-Instruct-FP8 & 27.85 & 23.69 & 14.3 & 1$\times$RTX5090 (32 GB) & 23.7 \\
      Qwen3-8B-FP8 & 6.47 & 5.61 & 9.8 & 1$\times$RTX4070 (12 GB) & 12.6 \\
      FLUX.1-dev & 10.52 & 8.29 & 14.1 & 1$\times$RTX4070 (12 GB) & 177.1 \\
      Wan2.1-T2V-14B & 17.40 & 12.65 & 25.4 & 1$\times$RTX4080 (16 GB) & 55.1 \\
      Wan2.2-T2V-A14B & 30.49 & 21.85 & 26.9 & 1$\times$RTX4090 (24 GB) & 108.3 \\
      Qwen-Image & 26.20 & 20.56 & 21.0 & 1$\times$RTX4090 (24 GB) & 126.6 \\
      \bottomrule
    \end{tabular}%
  }
\end{table*}

ECF8 delivers memory reductions from 9.8\% to 26.9\% across all evaluated models, with effectiveness correlating strongly with architecture type. Language models achieve moderate but consistent reductions between 9.8\% and 14.8\%, while diffusion transformers show substantially higher compression potential, with the Wan2.2-T2V-A14B model reaching 26.9\% reduction.

This pattern validates our theoretical foundation: FP8 exponent distributions in trained networks contain exploitable redundancy through lossless compression. Compression effectiveness remains remarkably stable across model scales, from 8B parameter Qwen3-8B-FP8 to 671B parameter DeepSeek-R1-0528, indicating that ECF8's performance depends on fundamental weight distribution properties rather than model size.

The practical impact extends far beyond storage efficiency. Memory reductions enable deployment on lower-capacity hardware: the 14.8\% reduction for DeepSeek-R1-0528 allows 8$\times$H100 deployment instead of requiring 8$\times$H200 or 2-node 8$\times$H100 systems, while the 25.4\% reduction for Wan2.1-T2V-14B fits within single RTX4080 constraints where uncompressed models exceed memory limits.

Under fixed memory constraints, ECF8 enables throughput improvements ranging from 11.3\% to 177.1\% by supporting larger batch sizes within the same memory budget. These gains compound ECF8's benefits beyond simple storage efficiency to deliver measurable inference performance improvements. Details of inference speed evaluation are presented in Section~\ref{sec:experiments.speed}.

\begin{figure*}[htb!]
  \centering
  \includegraphics[width=0.2\textwidth]{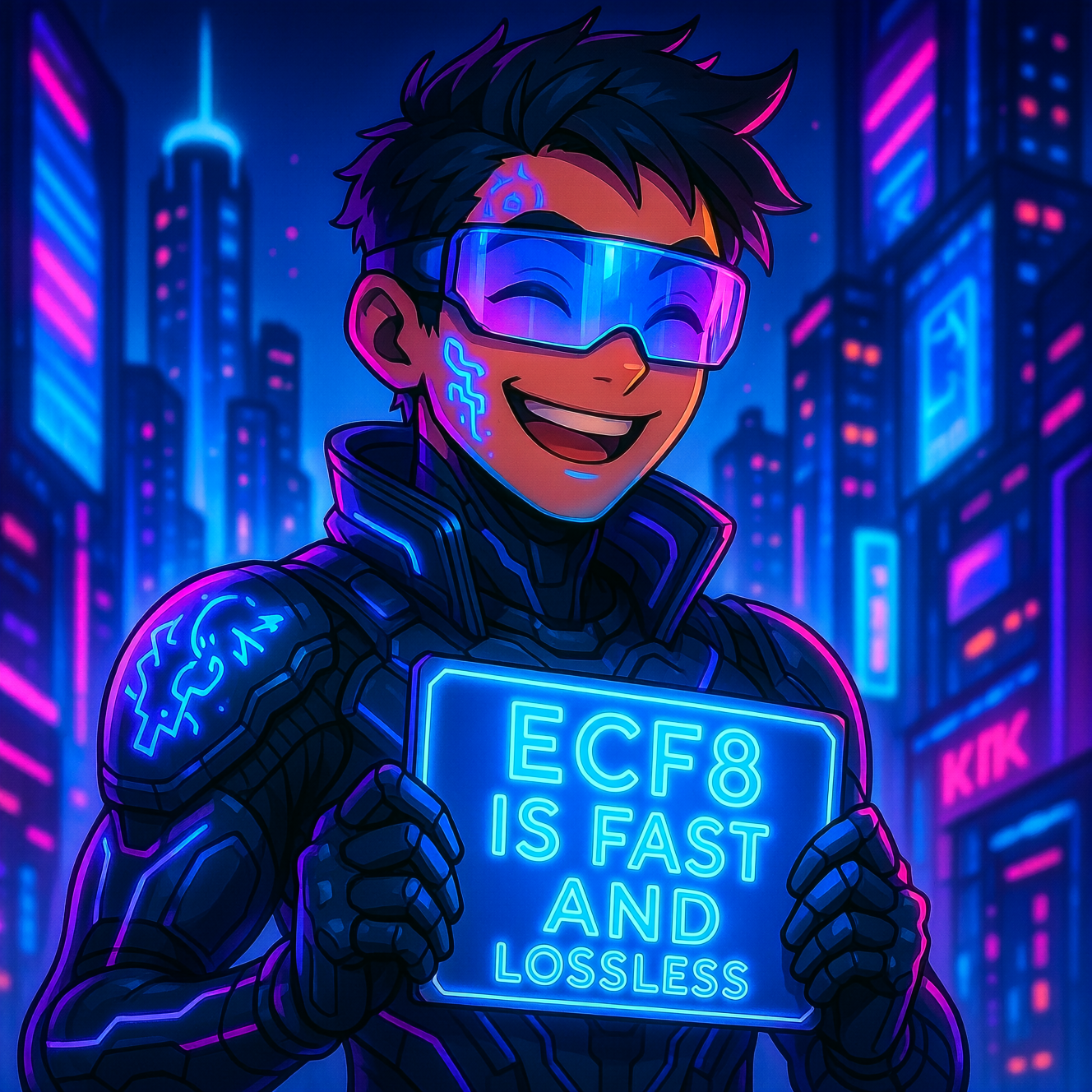}%
  \includegraphics[width=0.2\textwidth]{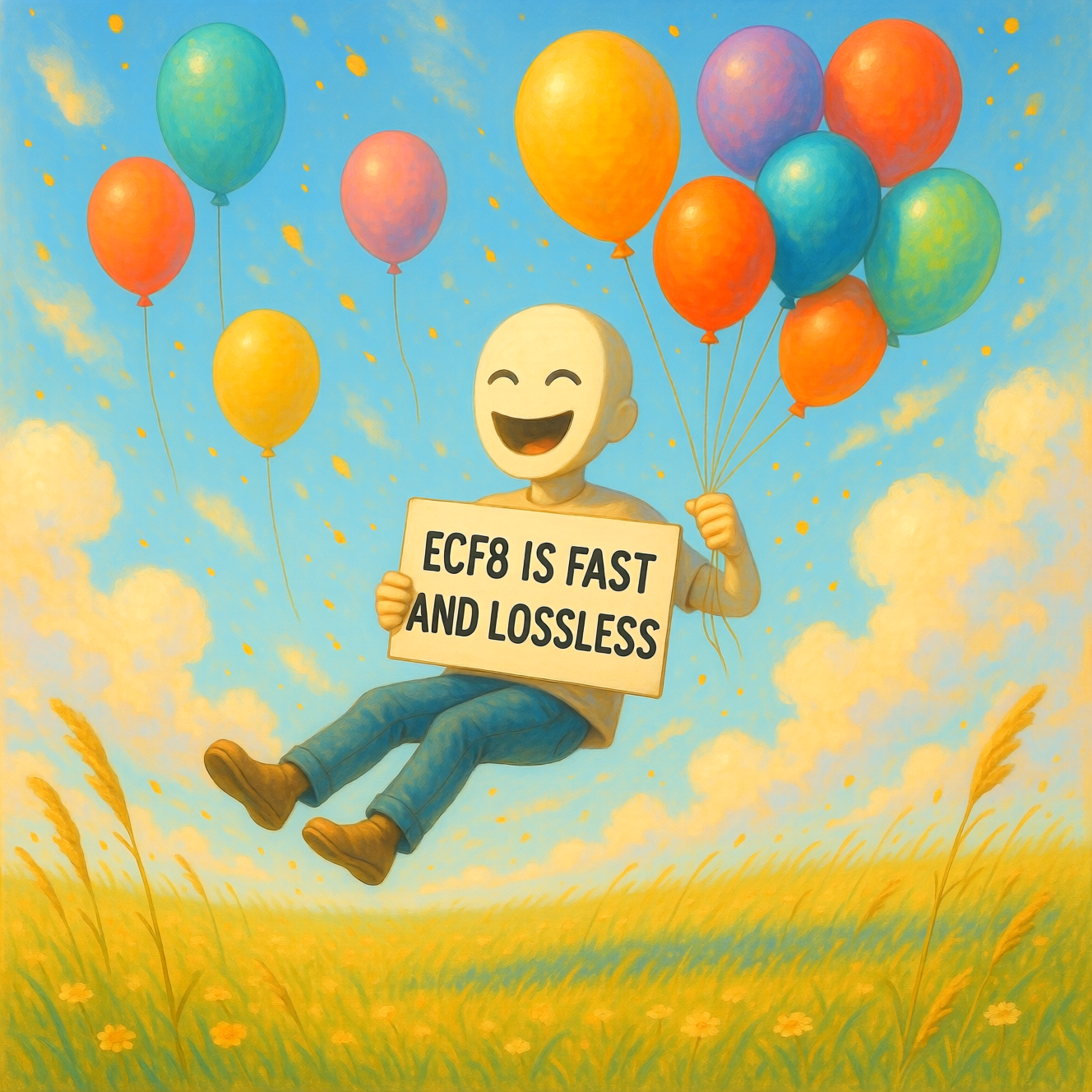}%
  \includegraphics[width=0.2\textwidth]{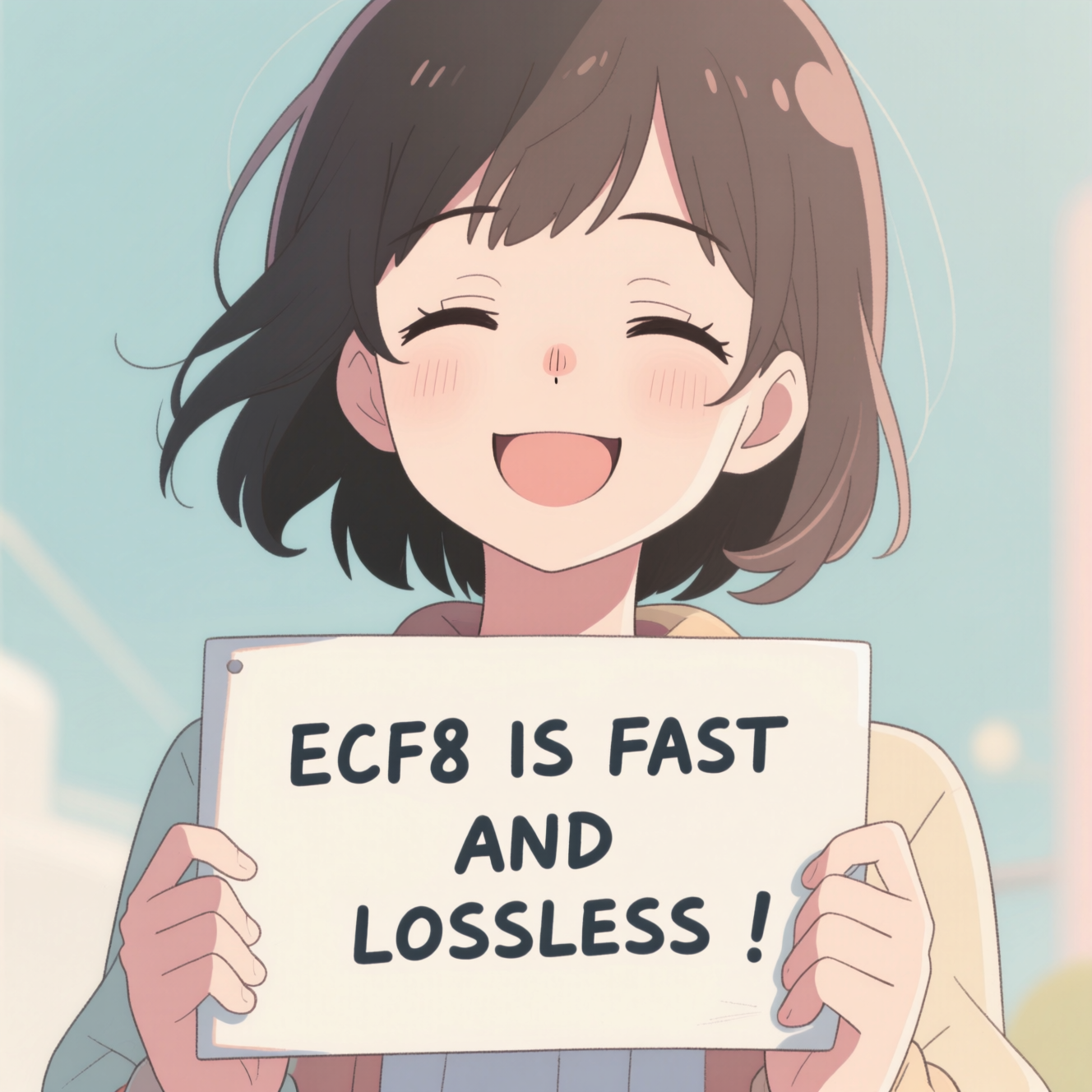}%
  \includegraphics[width=0.2\textwidth]{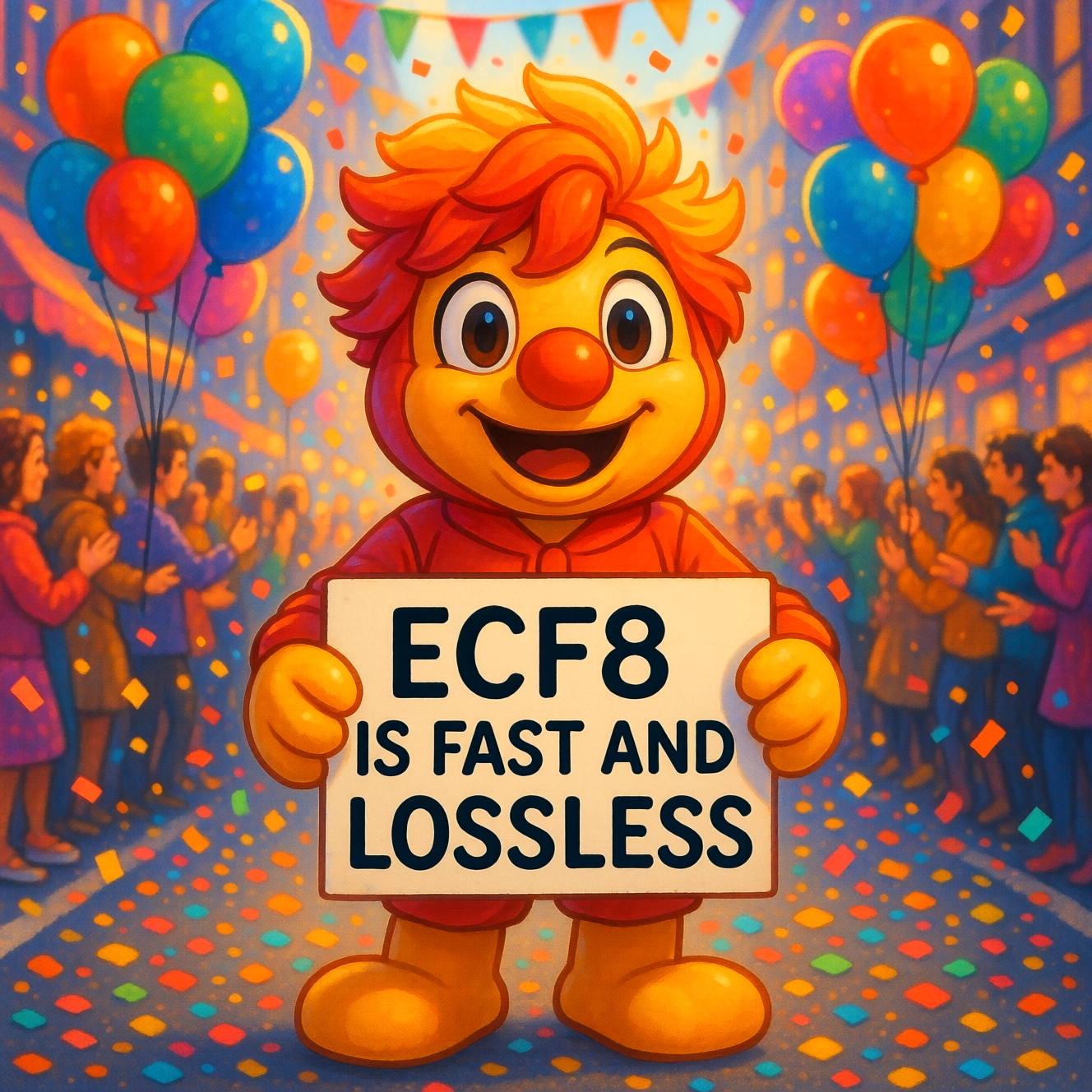}%
  \includegraphics[width=0.2\textwidth]{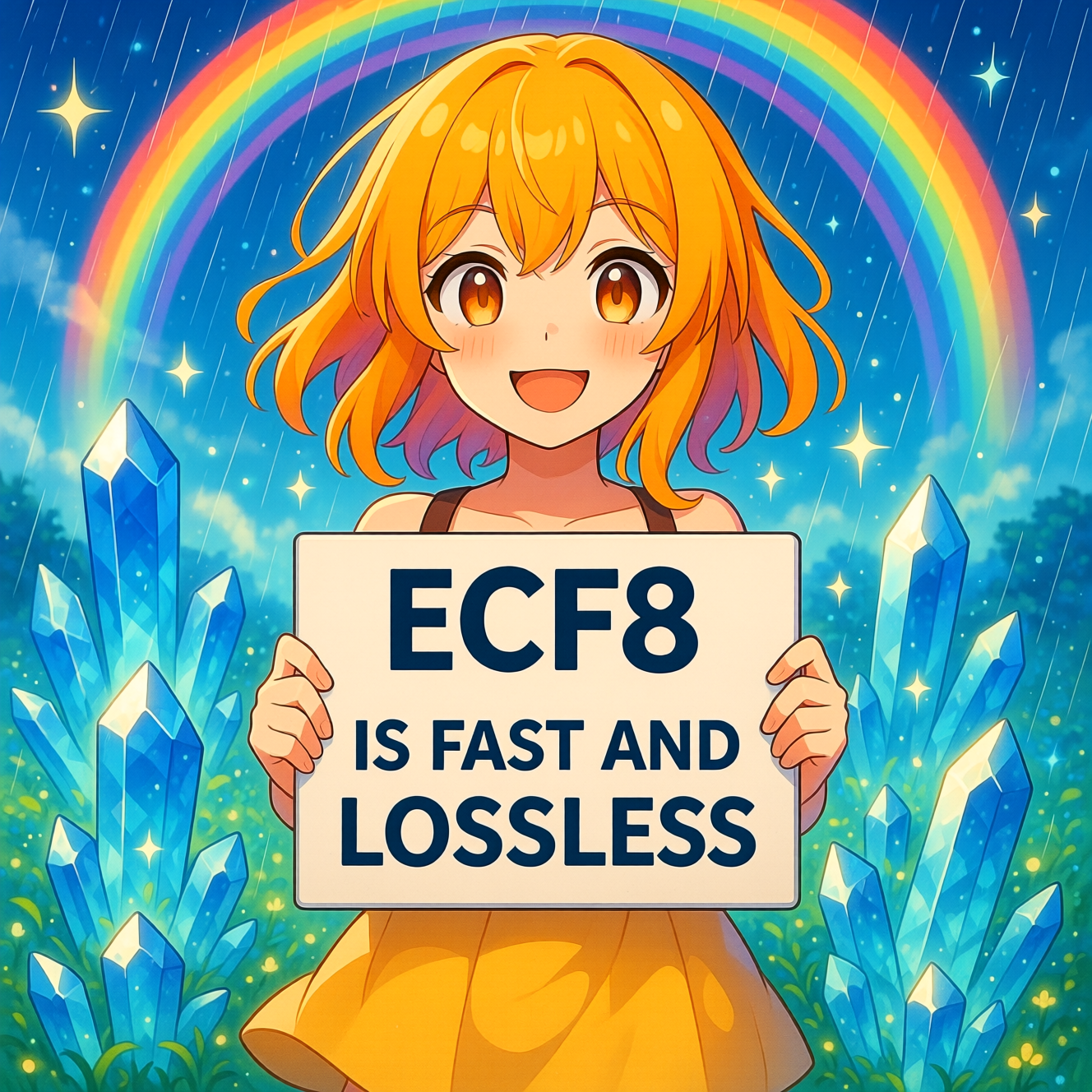}\\[-0.5ex]
  \includegraphics[width=0.2\textwidth]{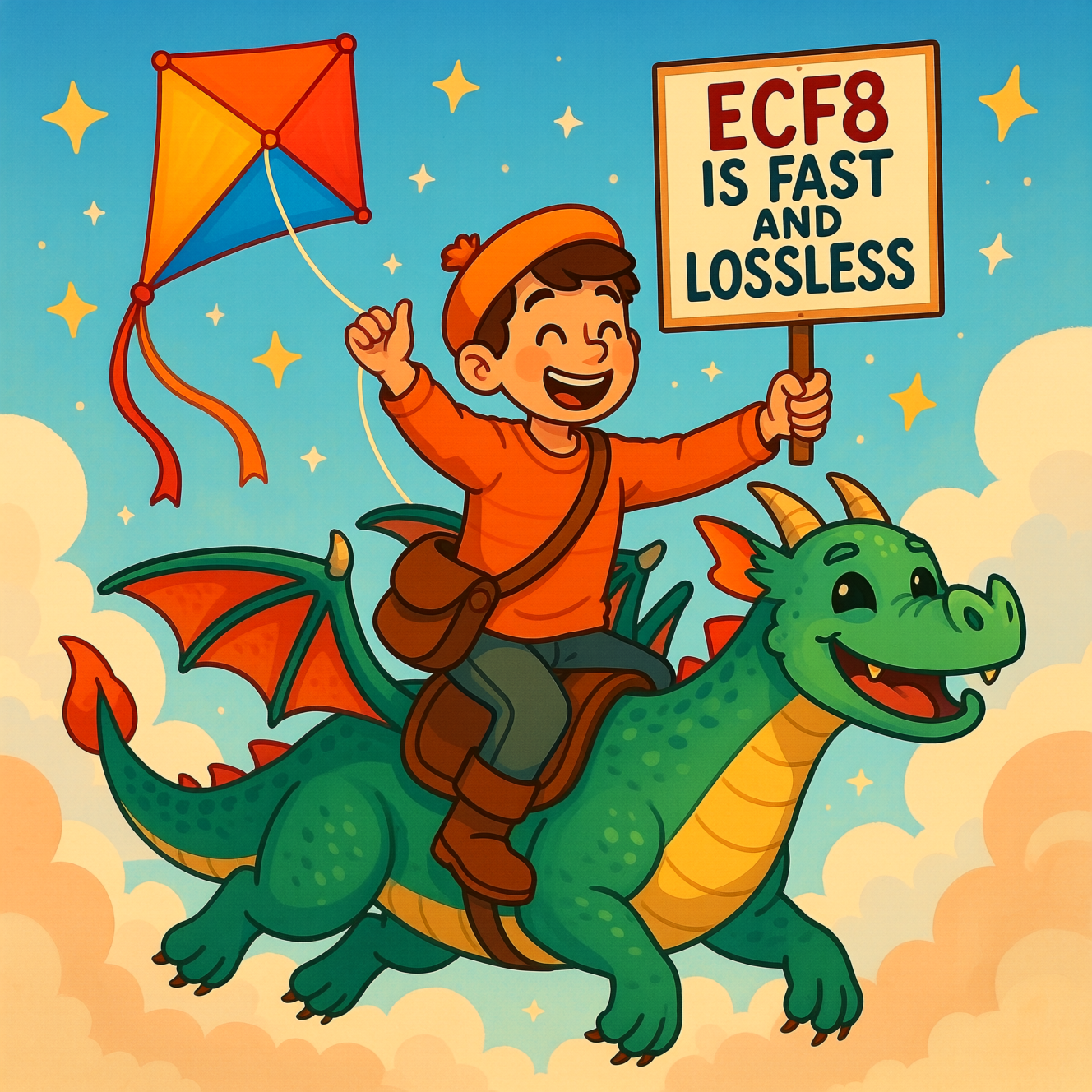}%
  \includegraphics[width=0.2\textwidth]{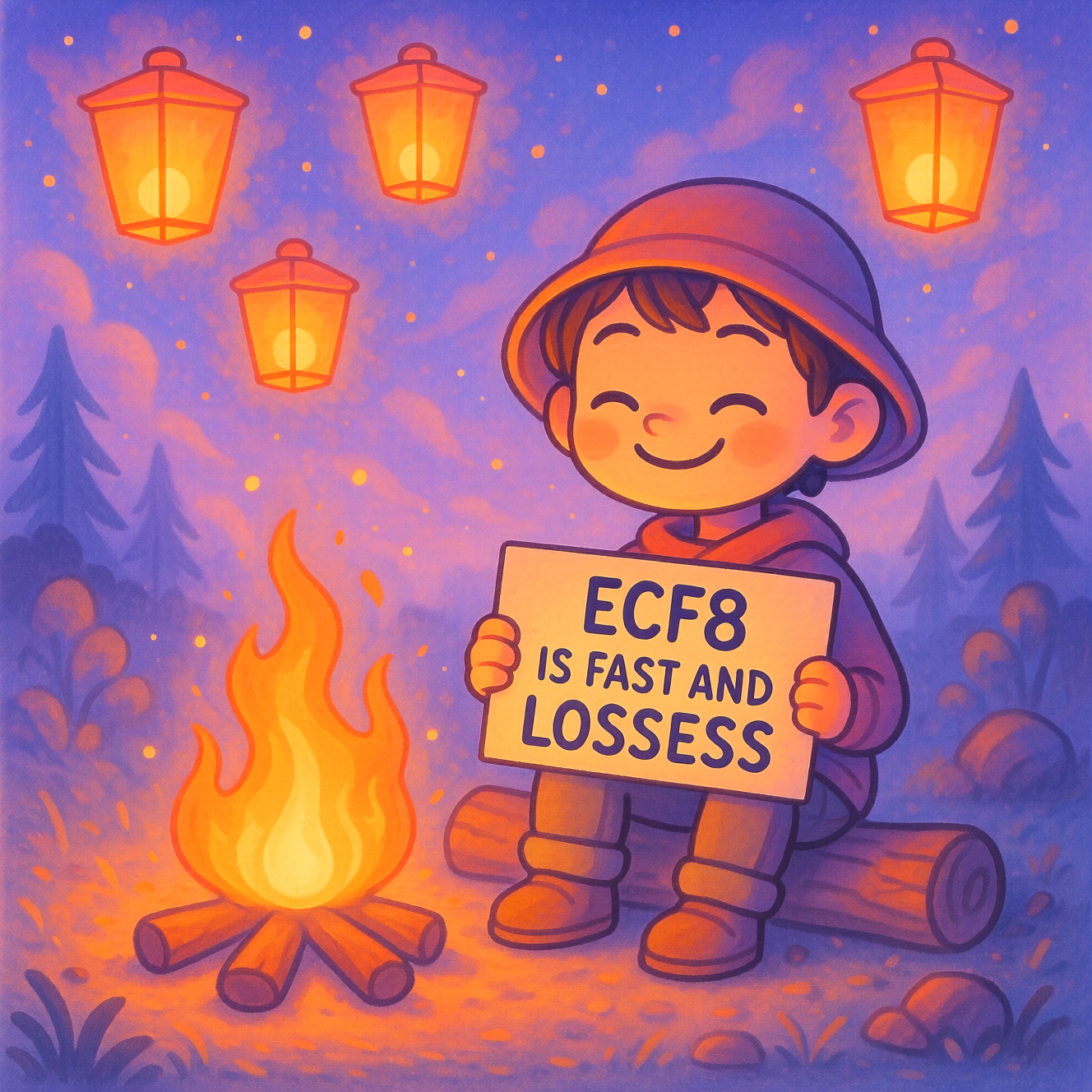}%
  \includegraphics[width=0.2\textwidth]{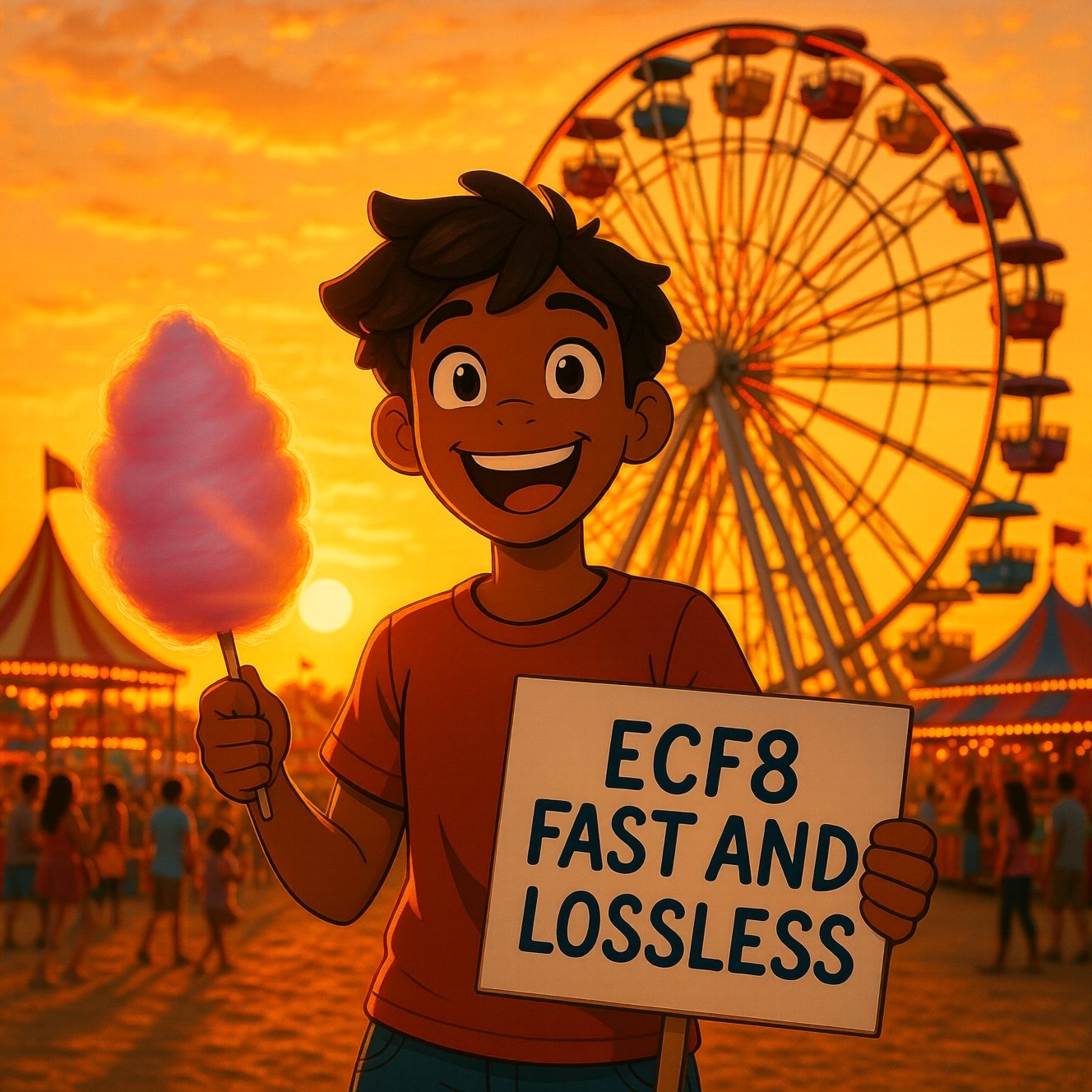}%
  \includegraphics[width=0.2\textwidth]{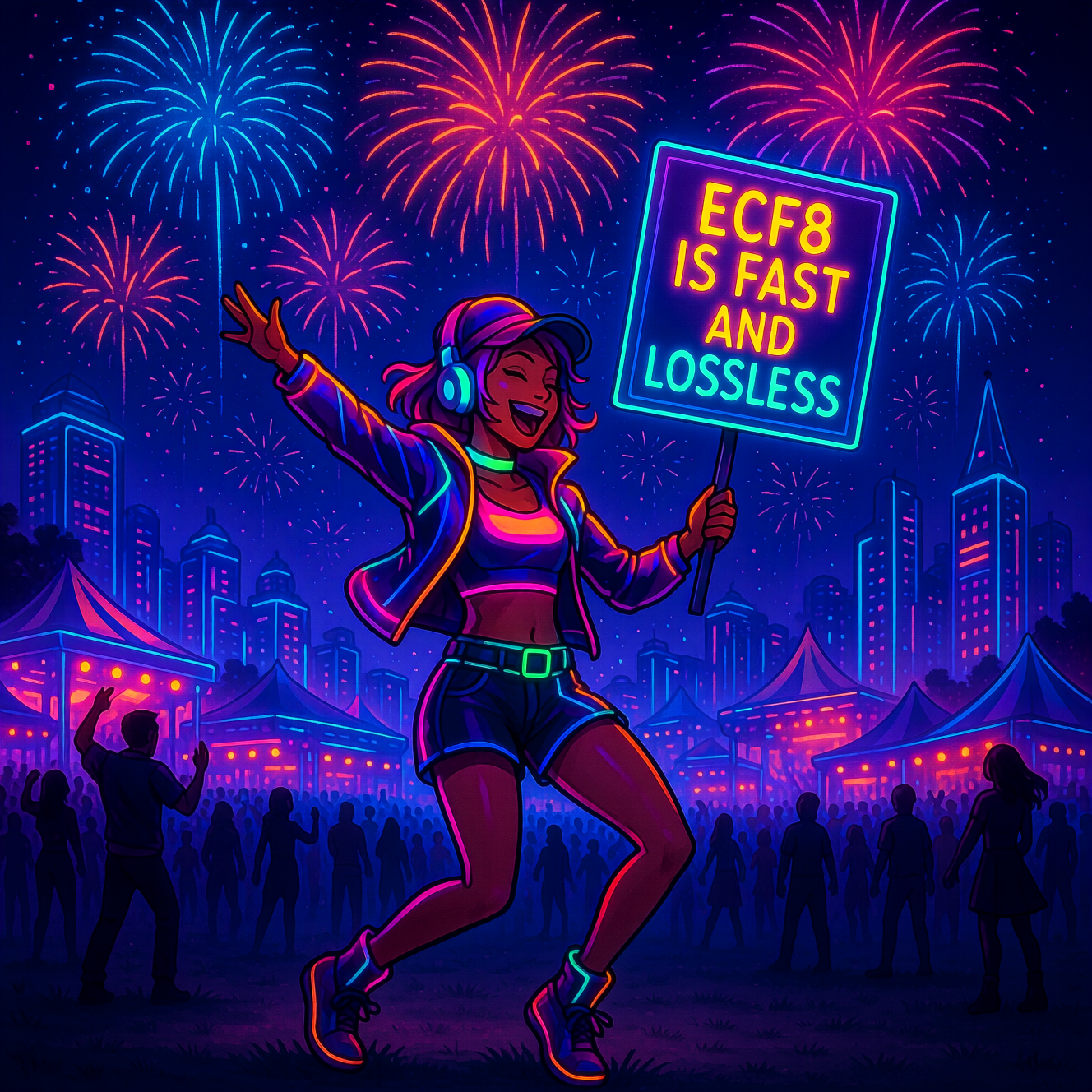}%
  \includegraphics[width=0.2\textwidth]{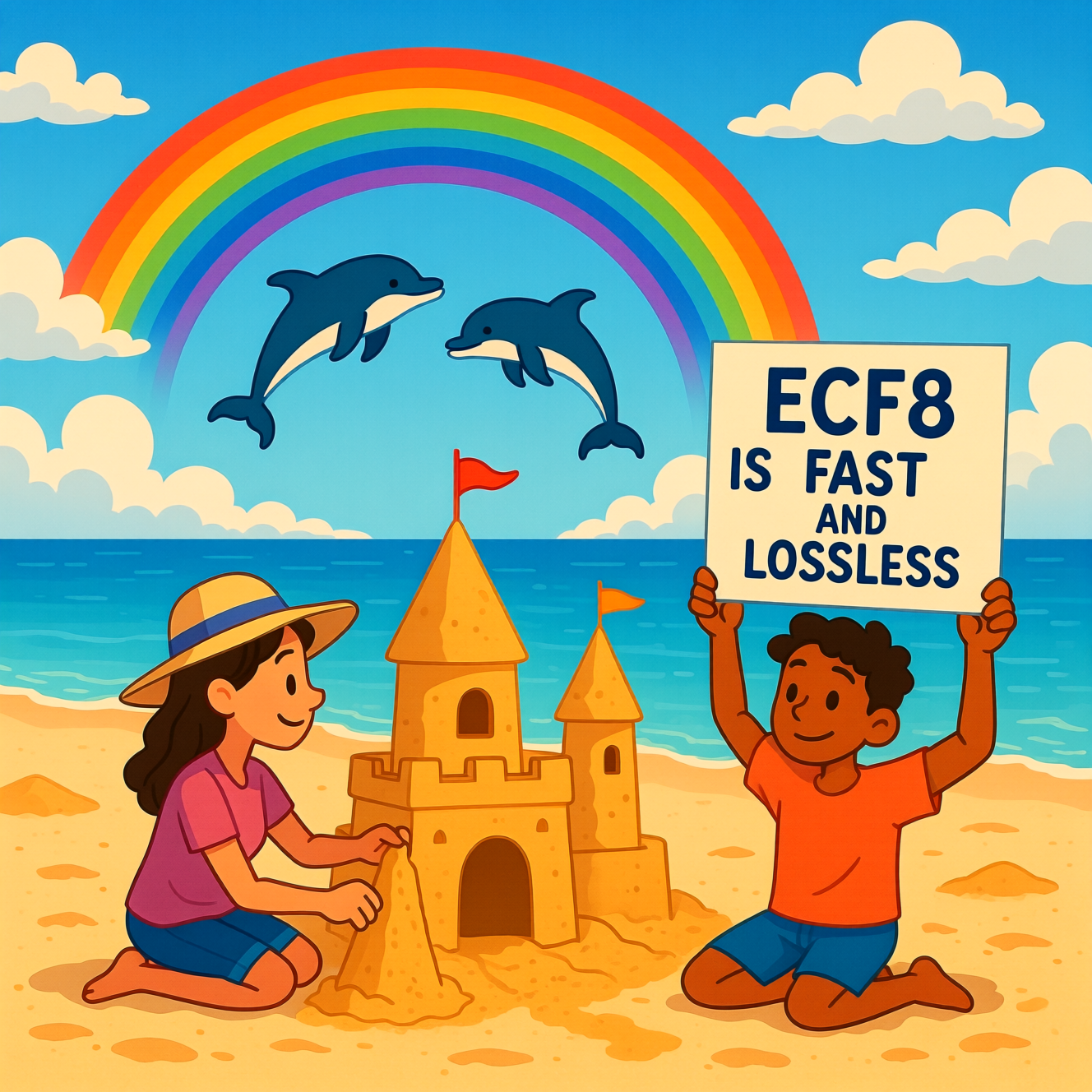}\\
  \vspace{-0.5em}
  \caption{Images generated by ECF8-compressed Qwen-Image model, demonstrating pixel-perfect reconstruction quality compared to the original FP8 model. The images generated by the original FP8 model are shown in Figure~\ref{fig:qwen_image_fp8}.}
  \label{fig:qwen_image_ecf8}
\end{figure*}

Figure~\ref{fig:qwen_image_ecf8} provides direct evidence of ECF8's lossless compression property through visual quality assessment. The ECF8-compressed Qwen-Image model generated these images using identical random seeds and inference parameters as the uncompressed FP8 baseline. Each output is pixel-by-pixel identical to the original model's results, confirming zero quality degradation.

This lossless property proves critical for production deployment, where model behavior must remain completely unchanged while achieving significant memory savings. The diverse generated content demonstrates that ECF8 preserves full generative capabilities across different image domains and styles without introducing any artifacts or quality loss.

\subsection{Improvement on Inference Speed under Fixed Memory Footprint}
\label{sec:experiments.speed}

This section addresses \textbf{RQ2} regarding the inference acceleration undera  fixed memory footprint. We evaluate inference speed improvements by measuring maximum achievable throughput within fixed memory budgets, simulating real-world scenarios where hardware capacity limits batch sizes. In conclusion, the memory savings achieved by ECF8 translate directly into substantial inference performance improvements under realistic deployment constraints. The rationale behind this observation is that smaller memory footprints allow for larger batch sizes, which in turn enable higher throughput, as well as shorter per-request latency. For instance, under a 640 GB memory limit, ECF8 DeepSeek-R1-0528 sustains a batch size of 16, delivering 150.3\% higher throughput and 60.1\% lower per-request latency than FP8, which can only accommodate a batch size of 2.

\begin{table*}[htb!]
  \centering
  \footnotesize
  \caption{Comparison of FP8 and ECF8 across LLMs of varying scales. Under fixed memory constraints, we report per-request latency (s), throughput (tokens/s), and maximum batch size (each batch generates 1024 tokens). DeepSeek-R1-0528 and Qwen3-235B-A22B-Instruct-2507 are evaluated on H100 GPUs (141~GB), while other models are evaluated on a single GH200 GPU (96~GB).}
  \vspace{-0.8em}
  \label{tab:llm_performance}
  \resizebox{\textwidth}{!}{%
    \begin{tabular}{@{}l r
                    cc
                    ccS[table-format=3.1]
                    ccS[table-format=3.1]@{}}
      \toprule
      \multicolumn{2}{c}{\textbf{Model / Constraint}} &
      \multicolumn{2}{c}{\textbf{Max Batch Size}} &
      \multicolumn{3}{c}{\textbf{Per Request Latency (s)}} &
      \multicolumn{3}{c}{\textbf{Throughput (tokens/s)}} \\
      \cmidrule(lr){1-2} \cmidrule(lr){3-4} \cmidrule(lr){5-7} \cmidrule(lr){8-10}
      \textbf{Model} & \textbf{Constraint} & \textbf{FP8} & \textbf{ECF8} & \textbf{FP8} & \textbf{ECF8} & {\textbf{$\downarrow$ (\%)}} & \textbf{FP8} & \textbf{ECF8} & {\textbf{$\uparrow$ (\%)}} \\
      \midrule
      DeepSeek-R1-0528 & 640 GB & 2  & 16 & 660.65 & 263.95 & 60.1 & 1.55   & 3.88  & 150.3 \\
      Qwen3-235B-A22B-Instruct-2507-FP8 & 240 GB & 32 & 64 & 107.56 & 79.14  & 26.4 & 9.52   & 12.94 & 35.9 \\
      Llama-3.3-70B-Instruct-FP8-dynamic & 80 GB & 32 & 48 & 24.80  & 22.28  & 10.2 & 41.28  & 45.96 & 11.3 \\
      Qwen3-Coder-30B-A3B-Instruct-FP8 & 32 GB & 16 & 32 & 107.33 & 86.70  & 19.2 & 9.54   & 11.80 & 23.7 \\
      Qwen3-8B-FP8 & 12 GB & 16 & 24 & 4.90   & 4.35   & 11.2 & 208.80 & 235.22 & 12.6 \\
      \bottomrule
    \end{tabular}%
  }
\end{table*}

\paragraph{Language model inference acceleration.} Table~\ref{tab:llm_performance} results demonstrate consistent throughput improvements across language models ranging from 11.3\% to 150.3\%. The DeepSeek-R1-0528 model shows the most dramatic improvement, achieving 150.3\% higher throughput (3.88 vs 1.55 tokens/s) by enabling 8$\times$ larger batch sizes (16 vs 2) within the 640 GB memory constraint. This improvement stems from ECF8's ability to reduce the model's memory footprint from 623GB to 530GB, creating sufficient headroom for larger batches.

Smaller models exhibit more modest but still significant gains. The Qwen3-8B-FP8 model achieves 12.6\% throughput improvement by increasing batch size from 16 to 24 within a 12GB constraint. Even with constrained memory budgets typical of consumer hardware, ECF8 consistently enables meaningful performance improvements through more efficient memory utilization.

\begin{table*}[htb!]
  \centering
  \footnotesize
  \caption{Comparison of FP8 and ECF8 across popular DiTs. Reported metrics include end-to-end (E2E) latency (s), step latency (ms), and GPU peak memory (MB). All experiments are conducted with the DiffSynth library on a single GH200 GPU (96 GB) using an identical batch size, random seed, and prompt. DiffSynth enables VRAM management by default, dynamically offloading and reloading model components between GPU and CPU to reduce peak memory usage.}
  \label{tab:diffusion_performance}
  \vspace{-0.8em}
  \resizebox{\textwidth}{!}{%
    \begin{tabular}{l
                    l
                    c
                    c
                    c
                    S[table-format=2.1]
                    S[table-format=2.1]}
      \toprule
      \textbf{Model} &
      \textbf{DType} &
      {\textbf{E2E Latency (s)}} &
      {\textbf{Step Latency (ms)}} &
      {\textbf{Memory (MB)}} &
      {\textbf{Memory $\downarrow$ (\%)}} &
      {\textbf{Latency $\downarrow$ (\%)}} \\
      \midrule
      \multirow{2}{*}{FLUX.1-dev}
      & ECF8 & 13.15 $\pm$ 0.08 & 438.4 $\pm$ 2.8 & 14274 & 12.1 & 45.9 \\
      & FP8 & 24.29 $\pm$ 0.10 & 809.5 $\pm$ 3.4 & 16243 & 0.0 & 0.0 \\
      \midrule
      \multirow{2}{*}{Wan2.1-T2V-14B}
      & ECF8 & 460.67 $\pm$ 0.92 & 9213.4 $\pm$ 18.5 & 18036 & 7.6 & 3.3 \\
      & FP8 & 476.21 $\pm$ 3.34 & 9524.3 $\pm$ 66.8 & 19529 & 0.0 & 0.0 \\
      \midrule
      \multirow{2}{*}{Wan2.2-T2V-A14B}
      & ECF8 & 461.41 $\pm$ 1.06 & 9228.2 $\pm$ 21.3 & 27560 & 17.8 & 4.0 \\
      & FP8 & 480.45 $\pm$ 0.68 & 9608.9 $\pm$ 13.5 & 33517 & 0.0 & 0.0 \\
      \midrule
      \multirow{2}{*}{Qwen-Image}
      & ECF8 & 49.05 $\pm$ 0.07 & 1226.3 $\pm$ 1.7 & 25766 & 7.9 & 55.9 \\
      & FP8 & 111.14 $\pm$ 1.39 & 2778.4 $\pm$ 34.8 & 27963 & 0.0 & 0.0 \\
      \bottomrule
    \end{tabular}%
  }
\end{table*}

\paragraph{Diffusion model inference acceleration.} Diffusion models are primarily compute-bound rather than memory-bound, with latency dominated by extensive denoising computations across multiple timesteps. VRAM management, which dynamically offloads and reloads model components between GPU and CPU memory, represents a common deployment strategy for diffusion models to handle memory constraints during inference. As shown in Table~\ref{tab:diffusion_performance}, ECF8 consistently outperforms FP8 baselines across four representative diffusion transformer architectures, achieving memory reductions from 7.9\% to 17.8\% and latency improvements from 3.3\% to 55.9\% under controlled experimental conditions using the DiffSynth library. FLUX.1-dev exhibits the most substantial gains with 45.9\% end-to-end latency reduction (13.15s vs 24.29s) and 12.1\% memory savings, while Qwen-Image demonstrates exceptional step-level efficiency with 55.9\% per-step latency improvement (1226.3ms vs 2778.4ms). The video generation models Wan2.1-T2V-14B and Wan2.2-T2V-A14B show more modest latency improvements of 3.3-4.0\%, though Wan2.2-T2V-A14B achieves 17.8\% memory savings. These results validate that ECF8's compact weight representation reduces communication overhead during the frequent weight loading operations characteristic of VRAM-managed diffusion inference, translating storage efficiency into measurable performance gains across diverse model architectures and scales. Beyond single-batch performance gains, the reduced peak memory consumption enables larger batch sizes within fixed memory constraints, translating to remarkable throughput improvements of 55.1\% to 177.1\% as demonstrated in Table~\ref{tab:memory_saving}.
\section{Related Work}
\label{sec:related}

\paragraph{Weight compression of GenAI models.}
Quantization \citep{hubara2018quantized} has become the dominant compression paradigm, which reduces memory by conversion of 16-bit weights to lower precision formats. LLM.int8() \citep{dettmers2022gpt3} achieved practical 8-bit inference through mixed-precision matrix multiplication, and delivered 2$\times$ memory reduction by processing most operations in INT8 while it handled outliers in FP16. SmoothQuant \citep{xiao2023smoothquant} enabled W8A8 quantization by migration of activation outliers into weights, while GPTQ \citep{frantar2022gptq}, AWQ \citep{lin2024awq}, and SpinQuant \citep{liu2024spinquant} pushed boundaries to 4-bit precision through second-order optimization, activation-aware selection, and learned rotations. However, quantization is inherently lossy and can cause unpredictable performance degradation, which translates to significant revenue losses at scale. DFloat11 \citep{zhang202570} addressed this by exploitation of the low entropy of BF16 weights, and proposed lossless compression via entropy-coded dynamic-length floats with efficient GPU decompression, which achieved 30\% memory reduction with bit-exact reconstruction. Nevertheless, these methods are specifically designed for BF16 weights. With the emergence of native FP8 models \citep{micikevicius2022fp8} as the future trend in GenAI, there remains an unaddressed need for efficient lossless compression techniques tailored to FP8 weights, which motivates our work.

\paragraph{Weight distribution analysis of neural networks.}
A growing body of work indicates that the weight matrices of deep neural networks are not well captured by classical Gaussian assumptions but instead exhibit heavy-tailed behavior that can often be modeled by $\alpha$-stable distributions. On the optimization side, \cite{gurbuzbalaban2021heavy} showed that stochastic gradient descent gives rise to heavy-tailed dynamics, where the properly rescaled iterates converge in distribution to $\alpha$-stable random variables, with heavier tails (smaller $\alpha$) linked to improved generalization. From the perspective of infinite-width limits, \cite{jung2021alpha} proved that networks with heavy-tailed initializations converge to $\alpha$-stable processes, while \cite{lee2023deep} extended these results to architectures with dependent weights, showing that heavy tails also induce sparsity and compressibility. On the spectral side, \cite{mahoney2019traditional} and \cite{martin2021implicit} analyzed trained weight matrices and observed power-law spectral densities, $\rho(\lambda) \propto \lambda^{-\alpha}$, with most exponents in the range $\alpha \in (2,4)$ across diverse architectures. Taken together, these findings suggest that heavy-tailed statistics are a robust and recurring feature of well-trained networks. As we show in Section~\ref{sec:exponent}, $\alpha$-stable distributions lead to exponent concentration and, consequently, low entropy, offering opportunities for lossless compression via entropy coding.

\section{Conclusion}
\label{sec:conclusion}
We revisited the problem of efficient model deployment for generative AI systems through the lens of low-precision computation. While integer quantization has been the prevailing approach, its lossy nature and reliance on dequantization limit its scalability in high-throughput environments. We demonstrated that neural network weights obey an \emph{exponent concentration} principle, rooted in $\alpha$-stable dynamics of stochastic gradient descent, which ensures that exponents carry bounded low entropy. This theoretical insight yields a fundamental compression limit around FP4.67. Motivated by this, we developed \textbf{ECF8}, a practical, lossless FP8 format with entropy-aware encoding and GPU-efficient decoding. Our experiments across LLMs and diffusion transformers show that ECF8 reduces memory consumption by up to 26.9\% and improves throughput by as much as 177.1\%, all while preserving bit-exact fidelity. Beyond immediate systems benefits, this work highlights exponent concentration as a universal property of trained models and lays the foundation for principled design of next-generation low-precision floating-point formats for GenAI.

\bibliographystyle{plainnat}
\bibliography{ref}

\begin{thebibliography}{32}
\providecommand{\natexlab}[1]{#1}
\providecommand{\url}[1]{\texttt{#1}}
\expandafter\ifx\csname urlstyle\endcsname\relax
  \providecommand{\doi}[1]{doi: #1}\else
  \providecommand{\doi}{doi: \begingroup \urlstyle{rm}\Url}\fi

\bibitem[Amari(1993)]{amari1993backpropagation}
Shun-ichi Amari.
\newblock Backpropagation and stochastic gradient descent method.
\newblock \emph{Neurocomputing}, 5\penalty0 (4-5):\penalty0 185--196, 1993.

\bibitem[Batifol et~al.(2025)Batifol, Blattmann, Boesel, Consul, Diagne, Dockhorn, English, English, Esser, Kulal, et~al.]{batifol2025flux}
Stephen Batifol, Andreas Blattmann, Frederic Boesel, Saksham Consul, Cyril Diagne, Tim Dockhorn, Jack English, Zion English, Patrick Esser, Sumith Kulal, et~al.
\newblock {FLUX.1 Kontext}: Flow matching for in-context image generation and editing in latent space.
\newblock \emph{arXiv e-prints}, pages arXiv--2506, 2025.

\bibitem[Dettmers et~al.(2022)Dettmers, Lewis, Belkada, and Zettlemoyer]{dettmers2022gpt3}
Tim Dettmers, Mike Lewis, Younes Belkada, and Luke Zettlemoyer.
\newblock {LLM.int8()}: 8-bit matrix multiplication for transformers at scale.
\newblock \emph{Advances in Neural Information Processing Systems (NeurIPS)}, 35:\penalty0 30318--30332, 2022.

\bibitem[Dongarra et~al.(1990)Dongarra, Du~Croz, Hammarling, and Duff]{dongarra1990set}
Jack~J Dongarra, Jeremy Du~Croz, Sven Hammarling, and Iain~S Duff.
\newblock A set of level 3 basic linear algebra subprograms.
\newblock \emph{ACM Transactions on Mathematical Software (TOMS)}, 16\penalty0 (1):\penalty0 1--17, 1990.

\bibitem[Dubey et~al.(2024)Dubey, Jauhri, Pandey, Kadian, Al-Dahle, Letman, Mathur, Schelten, Yang, Fan, et~al.]{dubey2024llama}
Abhimanyu Dubey, Abhinav Jauhri, Abhinav Pandey, Abhishek Kadian, Ahmad Al-Dahle, Aiesha Letman, Akhil Mathur, Alan Schelten, Amy Yang, Angela Fan, et~al.
\newblock The {Llama} 3 herd of models.
\newblock \emph{arXiv e-prints}, pages arXiv--2407, 2024.

\bibitem[Frantar et~al.(2022)Frantar, Ashkboos, Hoefler, and Alistarh]{frantar2022gptq}
Elias Frantar, Saleh Ashkboos, Torsten Hoefler, and Dan Alistarh.
\newblock {GPTQ}: Accurate post-training quantization for generative pre-trained transformers.
\newblock \emph{arXiv preprint arXiv:2210.17323}, 2022.

\bibitem[Goto and Geijn(2008)]{goto2008anatomy}
Kazushige Goto and Robert A van~de Geijn.
\newblock Anatomy of high-performance matrix multiplication.
\newblock \emph{ACM Transactions on Mathematical Software (TOMS)}, 34\penalty0 (3):\penalty0 1--25, 2008.

\bibitem[Gurbuzbalaban et~al.(2021)Gurbuzbalaban, Simsekli, and Zhu]{gurbuzbalaban2021heavy}
Mert Gurbuzbalaban, Umut Simsekli, and Lingjiong Zhu.
\newblock The heavy-tail phenomenon in {SGD}.
\newblock In \emph{International Conference on Machine Learning (ICML)}, pages 3964--3975. PMLR, 2021.

\bibitem[Hubara et~al.(2018)Hubara, Courbariaux, Soudry, El-Yaniv, and Bengio]{hubara2018quantized}
Itay Hubara, Matthieu Courbariaux, Daniel Soudry, Ran El-Yaniv, and Yoshua Bengio.
\newblock Quantized neural networks: Training neural networks with low precision weights and activations.
\newblock \emph{Journal of Machine Learning Research (JMLR)}, 18\penalty0 (187):\penalty0 1--30, 2018.

\bibitem[Hurst et~al.(2024)Hurst, Lerer, Goucher, Perelman, Ramesh, Clark, Ostrow, Welihinda, Hayes, Radford, et~al.]{hurst2024gpt}
Aaron Hurst, Adam Lerer, Adam~P Goucher, Adam Perelman, Aditya Ramesh, Aidan Clark, AJ~Ostrow, Akila Welihinda, Alan Hayes, Alec Radford, et~al.
\newblock {GPT-4o} system card.
\newblock \emph{arXiv preprint arXiv:2410.21276}, 2024.

\bibitem[Jin et~al.(2024)Jin, Du, Huang, Liu, Luan, Wang, and Xiong]{jin2024comprehensive}
Renren Jin, Jiangcun Du, Wuwei Huang, Wei Liu, Jian Luan, Bin Wang, and Deyi Xiong.
\newblock A comprehensive evaluation of quantization strategies for large language models.
\newblock In \emph{Findings of the Association for Computational Linguistics (ACL)}, pages 12186--12215, 2024.

\bibitem[Jung et~al.(2021)Jung, Lee, Lee, and Yang]{jung2021alpha}
Paul Jung, Hoil Lee, Jiho Lee, and Hongseok Yang.
\newblock $\alpha$-stable convergence of heavy-tailed infinitely-wide neural networks.
\newblock \emph{arXiv preprint arXiv:2106.11064}, 2021.

\bibitem[Lee et~al.(2023)Lee, Ayed, Jung, Lee, Yang, and Caron]{lee2023deep}
Hoil Lee, Fadhel Ayed, Paul Jung, Juho Lee, Hongseok Yang, and Francois Caron.
\newblock Deep neural networks with dependent weights: Gaussian process mixture limit, heavy tails, sparsity and compressibility.
\newblock \emph{Journal of Machine Learning Research (JMLR)}, 24\penalty0 (289):\penalty0 1--78, 2023.

\bibitem[Lin et~al.(2024{\natexlab{a}})Lin, Tang, Tang, Yang, Chen, Wang, Xiao, Dang, Gan, and Han]{lin2024awq}
Ji~Lin, Jiaming Tang, Haotian Tang, Shang Yang, Wei-Ming Chen, Wei-Chen Wang, Guangxuan Xiao, Xingyu Dang, Chuang Gan, and Song Han.
\newblock {AWQ}: Activation-aware weight quantization for on-device {LLM} compression and acceleration.
\newblock \emph{Proceedings of Machine Learning and Systems}, 6:\penalty0 87--100, 2024{\natexlab{a}}.

\bibitem[Lin et~al.(2024{\natexlab{b}})Lin, Tang, Yang, Zhang, Xiao, Gan, and Han]{lin2024qserve}
Yujun Lin, Haotian Tang, Shang Yang, Zhekai Zhang, Guangxuan Xiao, Chuang Gan, and Song Han.
\newblock {QServe: W4A8KV4} quantization and system co-design for efficient {LLM} serving.
\newblock \emph{arXiv preprint arXiv:2405.04532}, 2024{\natexlab{b}}.

\bibitem[Liu et~al.(2024{\natexlab{a}})Liu, Feng, Xue, Wang, Wu, Lu, Zhao, Deng, Zhang, Ruan, et~al.]{liu2024deepseek}
Aixin Liu, Bei Feng, Bing Xue, Bingxuan Wang, Bochao Wu, Chengda Lu, Chenggang Zhao, Chengqi Deng, Chenyu Zhang, Chong Ruan, et~al.
\newblock {DeepSeek-V3} technical report.
\newblock \emph{arXiv preprint arXiv:2412.19437}, 2024{\natexlab{a}}.

\bibitem[Liu et~al.(2024{\natexlab{b}})Liu, Zhao, Fedorov, Soran, Choudhary, Krishnamoorthi, Chandra, Tian, and Blankevoort]{liu2024spinquant}
Zechun Liu, Changsheng Zhao, Igor Fedorov, Bilge Soran, Dhruv Choudhary, Raghuraman Krishnamoorthi, Vikas Chandra, Yuandong Tian, and Tijmen Blankevoort.
\newblock {SpinQuant}: {LLM} quantization with learned rotations.
\newblock \emph{arXiv preprint arXiv:2405.16406}, 2024{\natexlab{b}}.

\bibitem[Mahoney and Martin(2019)]{mahoney2019traditional}
Michael Mahoney and Charles Martin.
\newblock Traditional and heavy tailed self regularization in neural network models.
\newblock In \emph{International Conference on Machine Learning (ICML)}, pages 4284--4293. PMLR, 2019.

\bibitem[Martin and Mahoney(2021)]{martin2021implicit}
Charles~H Martin and Michael~W Mahoney.
\newblock Implicit self-regularization in deep neural networks: Evidence from random matrix theory and implications for learning.
\newblock \emph{Journal of Machine Learning Research (JMLR)}, 22\penalty0 (165):\penalty0 1--73, 2021.

\bibitem[Micikevicius et~al.(2022)Micikevicius, Stosic, Burgess, Cornea, Dubey, Grisenthwaite, Ha, Heinecke, Judd, Kamalu, et~al.]{micikevicius2022fp8}
Paulius Micikevicius, Dusan Stosic, Neil Burgess, Marius Cornea, Pradeep Dubey, Richard Grisenthwaite, Sangwon Ha, Alexander Heinecke, Patrick Judd, John Kamalu, et~al.
\newblock {FP8} formats for deep learning.
\newblock \emph{arXiv preprint arXiv:2209.05433}, 2022.

\bibitem[Nikias and Shao(1995)]{nikias1995signal}
Chrysostomos~L Nikias and Min Shao.
\newblock \emph{Signal processing with alpha-stable distributions and applications}.
\newblock Wiley-Interscience, 1995.

\bibitem[Shannon(1948)]{shannon1948mathematical}
Claude~E Shannon.
\newblock A mathematical theory of communication.
\newblock \emph{The Bell System Technical Journal}, 27\penalty0 (3):\penalty0 379--423, 1948.

\bibitem[Springer and Bientinesi(2018)]{springer2018design}
Paul Springer and Paolo Bientinesi.
\newblock Design of a high-performance gemm-like tensor--tensor multiplication.
\newblock \emph{ACM Transactions on Mathematical Software (TOMS)}, 44\penalty0 (3):\penalty0 1--29, 2018.

\bibitem[Team et~al.(2023)Team, Anil, Borgeaud, Alayrac, Yu, Soricut, Schalkwyk, Dai, Hauth, Millican, et~al.]{team2023gemini}
Gemini Team, Rohan Anil, Sebastian Borgeaud, Jean-Baptiste Alayrac, Jiahui Yu, Radu Soricut, Johan Schalkwyk, Andrew~M Dai, Anja Hauth, Katie Millican, et~al.
\newblock {Gemini}: a family of highly capable multimodal models.
\newblock \emph{arXiv preprint arXiv:2312.11805}, 2023.

\bibitem[Wan et~al.(2025)Wan, Wang, Ai, Wen, Mao, Xie, Chen, Yu, Zhao, Yang, et~al.]{wan2025wan}
Team Wan, Ang Wang, Baole Ai, Bin Wen, Chaojie Mao, Chen-Wei Xie, Di~Chen, Feiwu Yu, Haiming Zhao, Jianxiao Yang, et~al.
\newblock {Wan}: Open and advanced large-scale video generative models.
\newblock \emph{arXiv preprint arXiv:2503.20314}, 2025.

\bibitem[Wu et~al.(2025)Wu, Li, Zhou, Lin, Gao, Yan, Yin, Bai, Xu, Chen, et~al.]{wu2025qwen}
Chenfei Wu, Jiahao Li, Jingren Zhou, Junyang Lin, Kaiyuan Gao, Kun Yan, Sheng-ming Yin, Shuai Bai, Xiao Xu, Yilei Chen, et~al.
\newblock {Qwen-Image} technical report.
\newblock \emph{arXiv preprint arXiv:2508.02324}, 2025.

\bibitem[Xiao et~al.(2023)Xiao, Lin, Seznec, Wu, Demouth, and Han]{xiao2023smoothquant}
Guangxuan Xiao, Ji~Lin, Mickael Seznec, Hao Wu, Julien Demouth, and Song Han.
\newblock {SmoothQuant}: Accurate and efficient post-training quantization for large language models.
\newblock In \emph{International Conference on Machine Learning (ICML)}, pages 38087--38099. PMLR, 2023.

\bibitem[Yang et~al.(2025)Yang, Li, Yang, Zhang, Hui, Zheng, Yu, Gao, Huang, Lv, et~al.]{yang2025qwen3}
An~Yang, Anfeng Li, Baosong Yang, Beichen Zhang, Binyuan Hui, Bo~Zheng, Bowen Yu, Chang Gao, Chengen Huang, Chenxu Lv, et~al.
\newblock {Qwen3} technical report.
\newblock \emph{arXiv preprint arXiv:2505.09388}, 2025.

\bibitem[Yao et~al.(2022)Yao, Yazdani~Aminabadi, Zhang, Wu, Li, and He]{yao2022zeroquant}
Zhewei Yao, Reza Yazdani~Aminabadi, Minjia Zhang, Xiaoxia Wu, Conglong Li, and Yuxiong He.
\newblock {ZeroQuant}: Efficient and affordable post-training quantization for large-scale transformers.
\newblock \emph{Advances in Neural Information Processing Systems (NeurIPS)}, 35:\penalty0 27168--27183, 2022.

\bibitem[Zhang and Shrivastava(2024)]{zhang2024leanquant}
Tianyi Zhang and Anshumali Shrivastava.
\newblock {LeanQuant}: Accurate and scalable large language model quantization with loss-error-aware grid.
\newblock \emph{arXiv preprint arXiv:2407.10032}, 2024.

\bibitem[Zhang et~al.(2024)Zhang, Yi, Xu, and Shrivastava]{zhang2024kv}
Tianyi Zhang, Jonah Yi, Zhaozhuo Xu, and Anshumali Shrivastava.
\newblock {KV} cache is 1 bit per channel: Efficient large language model inference with coupled quantization.
\newblock \emph{Advances in Neural Information Processing Systems (NeurIPS)}, 37:\penalty0 3304--3331, 2024.

\bibitem[Zhang et~al.(2025)Zhang, Sui, Zhong, Chaudhary, Hu, and Shrivastava]{zhang202570}
Tianyi Zhang, Yang Sui, Shaochen Zhong, Vipin Chaudhary, Xia Hu, and Anshumali Shrivastava.
\newblock 70\% size, 100\% accuracy: Lossless {LLM} compression for efficient {GPU} inference via dynamic-length float.
\newblock \emph{arXiv preprint arXiv:2504.11651}, 2025.

\end{thebibliography}

\newpage
\appendix

\section*{Appendix}
\label{appx}

\section{The Use of Large Language Models (LLMs)}
\label{sec:llm_usage}

We used LLMs as a general-purpose assistive tool during the preparation of this work. 
Specifically, LLMs were employed for:
\begin{itemize}[leftmargin=*]
  \item \textbf{Writing assistance:} refining grammar, improving clarity of sentences, and suggesting alternative phrasings without altering the technical content.
  \item \textbf{Formatting:} generating \LaTeX{} code snippets for tables, figures, and algorithm blocks, which were then verified and adapted by the authors.
  \item \textbf{Brainstorming:} offering initial organizational structures for sections and summarizing related work, which the authors subsequently reviewed, fact-checked, and rewrote.
\end{itemize}

No parts of the research methodology, experiments, data analysis, or conclusions were generated by LLMs. 
The authors take full responsibility for the final content of the paper. 
LLMs were not used for creating or fabricating research results.

\section{Generative AI Models Used for Evaluation}
\label{appendix:models}

We evaluate a diverse set of state-of-the-art generative AI models spanning language, code, image, and video modalities. All models are released in FP8 or have community-supported FP8 versions.

\subsection{LLMs}
\begin{itemize}[leftmargin=*]
  \item {DeepSeek-R1-0528}\footnote{\url{https://huggingface.co/deepseek-ai/DeepSeek-R1-0528}}: a 671B-parameter reasoning-focused mixture-of-experts (MoE) model, one of the first native FP8 LLMs, achieving strong performance on complex reasoning tasks.
  \item {Qwen3-235B-A22B-Instruct-2507-FP8}\footnote{\url{https://huggingface.co/Qwen/Qwen3-235B-A22B-Instruct-2507-FP8}}: a 235B-parameter MoE FP8 model tuned for instruction following, reasoning, mathematics, coding, and tool use.
  \item {Llama-3.3-70B-Instruct-FP8-dynamic}\footnote{\url{https://huggingface.co/RedHatAI/Llama-3.3-70B-Instruct-FP8-dynamic}}: a 70B dense instruction-tuned model using symmetric FP8 quantization for weights and activations, improving throughput and reducing memory cost.
  \item {Qwen3-Coder-30B-A3B-Instruct-FP8}\footnote{\url{https://huggingface.co/Qwen/Qwen3-Coder-30B-A3B-Instruct-FP8}}: a 30.5B-parameter MoE FP8 model specialized for code generation and agentic tool use, supporting long contexts up to 256K tokens (extendable to 1M with Yarn).
  \item {Qwen3-8B-FP8}\footnote{\url{https://huggingface.co/Qwen/Qwen3-8B-FP8}}: an 8B dense FP8 model released by Qwen, serving as a lightweight but competitive baseline.
\end{itemize}

\subsection{DiTs}
\begin{itemize}[leftmargin=*]
  \item {FLUX.1-dev}\footnote{\url{https://huggingface.co/black-forest-labs/FLUX.1-dev}}: a 16B DiT for text-to-image generation, originally in BF16 with community FP8 implementations by community.
  \item {Wan2.1-T2V-14B}\footnote{\url{https://huggingface.co/Wan-AI/Wan2.1-T2V-14B}}: a 14B DiT for text-to-video generation, BF16 release with FP8 support by community.
  \item {Wan2.2-T2V-A14B}\footnote{\url{https://huggingface.co/Wan-AI/Wan2.2-T2V-A14B}}: a ~30B-parameter MoE DiT for text-to-video generation, representing a major architectural upgrade. Released in BF16 with FP8 support by community.
  \item {Qwen-Image}\footnote{\url{https://huggingface.co/Qwen/Qwen-Image}}: a 20B DiT-based model for image generation and editing, notable for high-fidelity text rendering and precise editing. Released in BF16 with FP8 implementations by community.
\end{itemize}

\section{Image Generation Settings}

We use the open-source DiffSynth implementation\footnote{\url{https://github.com/modelscope/DiffSynth-Studio}} for image generation. All DiT models are run with the default settings for image/video resolution, inference steps, and guidance scale. A fixed random seed (2025) is applied to guarantee comparability between the FP8 and ECF8 models. The prompts employed are listed in Table~\ref{tab:image_prompts}. The images generated by the FP8 Qwen-Image model are presented in Figure~\ref{fig:qwen_image_fp8}, and those generated by the ECF8 Qwen-Image model are shown in Figure~\ref{fig:qwen_image_ecf8}.

\begin{table}[htbp]
  \centering
  \footnotesize
  \caption{Prompts used for image generation.}
  \label{tab:image_prompts}
  \vspace{-1em}
  \begin{tabular}{p{0.95\linewidth}}
  \toprule
  \textbf{Prompt} \\
  \midrule
  A futuristic neon-lit cityscape with a cheerful cyberpunk character in a glowing high-tech exosuit, smiling with holographic tattoos and a reflective visor. The atmosphere is bright and vibrant with neon blues and purples, blending anime and sci-fi concept art aesthetics. Holding a sign saying ECF8 IS FAST AND LOSSLESS. \\
  \addlinespace[0.5ex]
  A playful surrealist artwork where colorful balloons float through a sunny meadow, and a joyful faceless figure relaxes in midair. The palette is light and cheerful with splashes of gold and pastel tones, evoking a sense of carefree happiness. Holding a sign saying ECF8 IS FAST AND LOSSLESS. \\
  \addlinespace[0.5ex]
  An anime female character, lofi style, soft colors, gentle natural linework, key art, emotion is happy. Hand drawn with an award-winning anime aesthetic and a well-defined nose. Holding a sign saying ECF8 IS FAST AND LOSSLESS. \\
  \addlinespace[0.5ex]
  A festive parade scene with a vibrant character at the center of a confetti-filled street, smiling brightly, with balloons and streamers in the background. Holding a sign saying ECF8 IS FAST AND LOSSLESS. \\
  \addlinespace[0.5ex]
  A radiant anime-style character standing in a glowing crystal meadow, surrounded by rainbows and magical sparkles, smiling with pure happiness. Holding a sign saying ECF8 IS FAST AND LOSSLESS. \\
  \addlinespace[0.5ex]
  A cheerful fantasy scene where a character rides a friendly dragon while flying a kite, filled with vibrant colors and joyful energy. Holding a sign saying ECF8 IS FAST AND LOSSLESS. \\
  \addlinespace[0.5ex]
  A pastel illustration of a character sitting by a glowing campfire, with warm lanterns floating above, smiling peacefully. Holding a sign saying ECF8 IS FAST AND LOSSLESS. \\
  \addlinespace[0.5ex]
  A joyful carnival sunset with a character in front of a Ferris wheel, holding cotton candy, illuminated by golden evening light. Holding a sign saying ECF8 IS FAST AND LOSSLESS. \\
  \addlinespace[0.5ex]
  A futuristic festival scene where a neon-clad character dances joyfully under holographic fireworks. Holding a sign saying ECF8 IS FAST AND LOSSLESS. \\
  \addlinespace[0.5ex]
  A serene beach scene with a character building a sandcastle under a rainbow, while dolphins leap in the distance. Holding a sign saying ECF8 IS FAST AND LOSSLESS. \\
  \bottomrule
  \end{tabular}
\end{table}

\begin{figure*}[htbp]
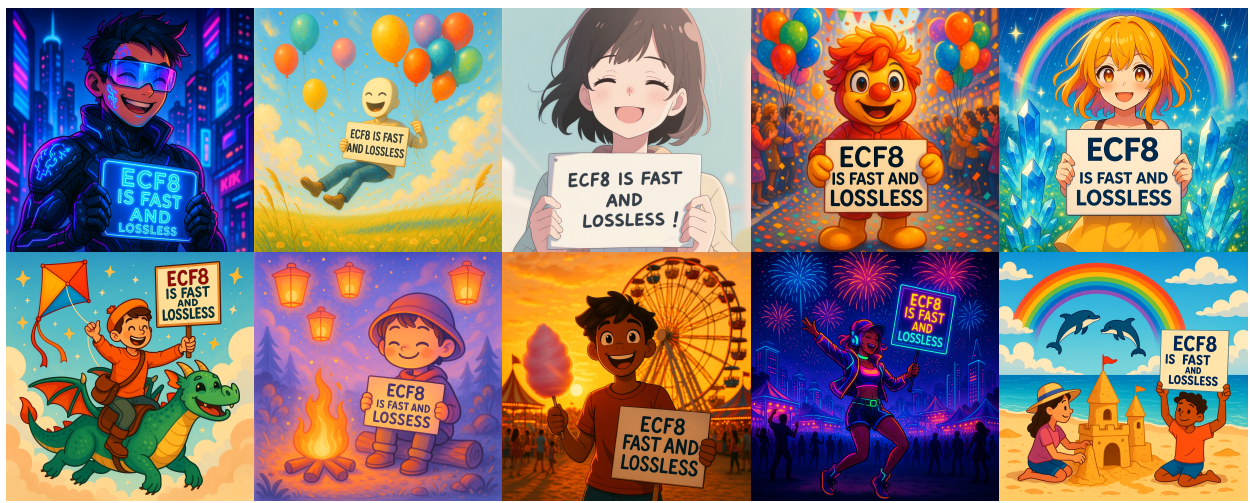

  \centering
  \includegraphics[width=0.2\textwidth]{plots/qwen-image/0.png}%
  \includegraphics[width=0.2\textwidth]{plots/qwen-image/1.png}%
  \includegraphics[width=0.2\textwidth]{plots/qwen-image/2.png}%
  \includegraphics[width=0.2\textwidth]{plots/qwen-image/3.png}%
  \includegraphics[width=0.2\textwidth]{plots/qwen-image/4.png}\\[-0.5ex]
  \includegraphics[width=0.2\textwidth]{plots/qwen-image/5.png}%
  \includegraphics[width=0.2\textwidth]{plots/qwen-image/6.png}%
  \includegraphics[width=0.2\textwidth]{plots/qwen-image/7.png}%
  \includegraphics[width=0.2\textwidth]{plots/qwen-image/8.png}%
  \includegraphics[width=0.2\textwidth]{plots/qwen-image/9.png}\\
  \vspace{-0.5em}
  \caption{Images generated by the FP8 Qwen-Image model. These are pixel-wise identical to the images generated by the ECF8-compressed model (see Figure~\ref{fig:qwen_image_ecf8}).}
  \label{fig:qwen_image_fp8}
\end{figure*}


\section{Software and Hardware}
\label{sec:soft_hard}

Our experiments are conducted using PyTorch 2.7.1, CUDA 12.8, Transformers 4.56.0, and Diffusers 0.34.0. The hardware configuration varies based on model size requirements: we employ 8$\times$H200 GPUs (141 GB memory each) for DeepSeek-R1-0528 due to its substantial memory demands, 4$\times$H200 GPUs (141 GB memory each) for Qwen3-235B-A22B-Instruct-2507-FP8, and a single GH200 GPU (96 GB memory) for all remaining models in our evaluation suite.

\clearpage

\section{Notations}
\label{sec:notations}

\begin{table}[ht]
\centering
\footnotesize
\caption{Symbols in Algorithm~\ref{alg:ecf8_fp8_decode}.}
\label{tab:notations}
\vspace{-1em}
\resizebox{\textwidth}{!}{%
\begin{tabular}{l|l|l}
\toprule
\textbf{Symbol} & \textbf{Description} & \textbf{Type / Shape} \\
\midrule
$LUT$ & Hierarchical lookup tables & $n_{\text{luts}} \times 256$ integers \\
$encoded$ & Encoded byte stream & $n_{\text{bytes}}$ bytes \\
$packed$ & Packed sign/mantissa nibbles & $\lceil n_{\text{elem}}/2 \rceil$ bytes \\
$outpos$ & Block output positions & $(n_{\text{blocks}}{+}1)$ 64-bit integers \\
$gaps$ & Packed 4-bit gap values (two per byte), over all threads &
$\left\lceil (n_{\text{blocks}}\!\cdot\!T)/2 \right\rceil$ bytes \\
$L$ & 64-bit sliding bit window & 64-bit integer \\
$S$ & 16-bit tail buffer & 16-bit integer \\
$f$ & Free bits in headroom & 8-bit integer \\
$c$ & Symbols counted per thread & 32-bit integer \\
$o_{\text{start}}$ & Thread output start index & 32-bit integer \\
$o_{\text{end}}$ & Thread output end index & 32-bit integer \\
$o_{\text{base}}$ & Block output base index & 32-bit integer \\
$localbf$ & Thread register buffer & Bytes of length $B + 2$ \\
$writebf$ & Shared write buffer & $outpos[b+1]-outpos[b]$ bytes \\
$accum$ & Shared prefix-sum array & $T+1$ 32-bit integers \\
$n_{\text{luts}}$ & Number of lookup tables & 32-bit integer \\
$n_{\text{bytes}}$ & Length of encoded stream & 32-bit integer \\
$n_{\text{elem}}$ & Number of output elements & 32-bit integer \\
$B$ & Bytes per thread window & Constant integer \\
$T$ & Number of threads per block & Constant integer \\
$outputs$ & Decoded FP8 output bytes & $n_{\text{elem}}$ bytes \\
$b$ & Thread-block index & 32-bit integer \\
$t$ & Thread index within block & 32-bit integer \\
$t_g$ & Global thread index ($b \cdot T + t$) & 64-bit integer \\
$g$ & Per-thread gap (extracted from $gaps$) & 4-bit integer \\
$x$ & Decoded symbol (exponent code) & 8-bit integer \\
$q$ & The packed sign/mantissa nibble & 4-bit integer \\
$b_\ell$ & Bit-length of the codeword for $x$ & 8-bit integer \\
\bottomrule
\end{tabular}%
}
\end{table} 

\clearpage

\section{Pseudocode for ECF8 Decompression}
\label{sec:ecf8_pseudocode}

Table~\ref{tab:notations} summarizes the notations used in this section.

\begin{algorithm}[htbp]
\footnotesize
\caption{Block-level decompression from ECF8 to FP8}
\label{alg:ecf8_fp8_decode}
\begin{algorithmic}[1]
\Require \(LUT,\,encoded,\,packed,\,outpos,\,gaps,\,n_{\text{luts}},\,n_{\text{bytes}},\,n_{\text{elem}}\)
\Ensure \(outputs\)
\Statex
\State \textbf{\textcolor{blue}{Parallel for each thread}} \(t\) in block \(b\) of size \(T\):
\State Global thread id \(t_g \gets b \cdot T + t\).
\State Load \(B + 2 = 10\) bytes from \(encoded\) at offset \(t_g \cdot B\) into local buffer \(localbf\).
\State Form a 64-bit head \(L\) from \(localbf[0..7]\) so the oldest byte is the most significant; form a 16-bit tail \(S\) from \(localbf[8..9]\).
\State Extract 4-bit gap \(g\) from packed \(gaps\) using \(t_g\):\;
\(g \gets \bigl(gaps[\lfloor t_g/2\rfloor] \gg (4 - (t_g \bmod 2)\cdot 4)\bigr) \,\&\, 0x0f\).
Set \(L \gets L \ll g\), \(f \gets g\), \(c \gets 0\).
\Statex \textbf{\textcolor{blue}{Phase 1: symbol counting}}
\While{\(f < 16\)}
\State \(x \gets LUT[L \gg 56]\)
\If{\(x \ge 240\)} \(\;\;x \gets LUT\bigl[256(256 - x) + ((L \gg 48)\,\&\,255)\bigr]\) \EndIf
\State \(b_\ell \gets LUT\bigl[256(n_{\text{luts}} - 1) + x\bigr]\);
\(L \gets L \ll b_\ell\); \(f \gets f + b_\ell\); \(c \gets c + 1\)
\EndWhile
\State Stitch tail: \(L \gets L \lor (S \ll (f - 16))\); \(f \gets f - 16\).
\While{\(2 + \lfloor f/8\rfloor < B\)}
\State Decode one symbol as above; update \(L,f,c\).
\EndWhile
\Statex \textbf{\textcolor{blue}{Block-level prefix sum}}
\State Each thread writes its count \(c\) to shared array \(accum[t]\); thread 0 writes \(accum[0] \gets outpos[b] + c\).
\State Perform in-place up-sweep and down-sweep on \(accum[0..T-1]\) to obtain exclusive starts.
\State Thread 0 sets \(accum[0] \gets outpos[b]\) and \(accum[T] \gets outpos[b{+}1]\).
\State Set \(o_{\text{base}} \gets accum[0]\), \(o_{\text{start}} \gets accum[t]\), \(o_{\text{end}} \gets \min(o_{\text{start}} + c,\, n_{\text{elem}})\).
\Statex \textbf{\textcolor{blue}{Phase 2: decode and assemble FP8}}
\State Reinitialize \(L, S, f\).
\While{\(f < 16\) \textbf{and} \(o_{\text{start}} < o_{\text{end}}\)}
\State Decode next symbol \(x\).
\State \(q \gets packed[\lfloor o_{\text{start}}/2\rfloor] \ll \bigl((o_{\text{start}}\bmod 2)\cdot 4\bigr)\).
\State \(\mathit{byte} \gets (x \ll 3)\,\lor\, (q \& 0x80)\,\lor\,((q \gg 4)\&0x07)\).
\State \(writebf[o_{\text{start}} - o_{\text{base}}] \gets \mathit{byte}\).
\State \(o_{\text{start}} \gets o_{\text{start}} + 1\);\; update \(L,f\) using \(b_\ell = LUT[256(n_{\text{luts}}{-}1)+x]\).
\EndWhile
\State Stitch remaining: \(L \gets L \lor (S \ll (f - 16))\); \(f \gets f - 16\).
\While{\(o_{\text{start}} < o_{\text{end}}\)}
\State Decode and pack as above to fill \(writebf\).
\EndWhile
\Statex \textbf{\textcolor{blue}{Output write-back}}
\State Synchronize all threads.
\State Each thread copies its slice of \(writebf\) to \(outputs\).
\end{algorithmic}
\end{algorithm}

\end{document}